\newtheorem{theorem}{Theorem}
\newtheorem{lemma}{Lemma}
\newtheorem{definition}{Definition}
\newtheorem{proposition}{Proposition}
\newtheorem{assumption}{Assumption}
\def\BibTeX{{\rm B\kern-.05em{\sc i\kern-.025em b}\kern-.08em
    T\kern-.1667em\lower.7ex\hbox{E}\kern-.125emX}}
\begin{document}
\captionsetup[figure]{labelfont={default},labelformat={default},labelsep=period,name={Fig.}}

\title{Theory of Mixture-of-Experts for Mobile Edge Computing}

\author{\IEEEauthorblockN{Hongbo Li}
\IEEEauthorblockA{\textit{Engineering Systems and Design Pillar} \\
\textit{Singapore University of Technology and Design}\\
Singapore \\
hongbo\_li@mymail.sutd.edu.sg}
\and
\IEEEauthorblockN{Lingjie Duan}
\IEEEauthorblockA{\textit{Engineering Systems and Design Pillar} \\
\textit{Singapore University of Technology and Design}\\
Singapore \\
lingjie\_duan@sutd.edu.sg}
\thanks{This work has been accepted by INFOCOM 2025.}
}
\maketitle

\begin{abstract}
In mobile edge computing (MEC) networks, mobile users generate diverse machine learning tasks dynamically over time. These tasks are typically offloaded to the nearest available edge server, by considering communication and computational efficiency. 
However, its operation does not ensure that each server specializes in a specific type of tasks and leads to severe overfitting or catastrophic forgetting of previous tasks. 
To improve the continual learning (CL) performance of online tasks, we are the first to introduce mixture-of-experts (MoE) theory in MEC networks and save MEC operation from the increasing generalization error over time. 
Our MoE theory treats each MEC server as an expert and dynamically adapts to changes in server availability by considering data transfer and computation time. 
Unlike existing MoE models designed for offline tasks, ours is tailored for handling continuous streams of tasks in the MEC environment. 
We introduce an adaptive gating network in MEC to adaptively identify and route newly arrived tasks of unknown data distributions to available experts, enabling each expert to specialize in a specific type of tasks upon convergence.
We derived the minimum number of experts required to match each task with a specialized, available expert.
Our MoE approach consistently reduces the overall generalization error over time, unlike the traditional MEC approach.
Interestingly, when the number of experts is sufficient to ensure convergence, adding more experts delays the convergence time and worsens the generalization error.
Finally, we perform extensive experiments on real datasets in deep neural networks (DNNs) to verify our theoretical results.
\end{abstract}

\section{Introduction}
In mobile edge computing (MEC) networks, mobile users randomly arrive over time to offload intensive machine learning tasks with unknown data distributions to edge servers with superior computing capabilities (e.g., \cite{guo2022distributed,zheng2023federated}). 
Existing offloading and computing strategies in MEC literature typically involve transferring data tasks to the nearest or most powerful available servers by considering communication and computation efficiency within the network (e.g., \cite{ouyang2018follow,shakarami2020survey,gao2019winning,yan2021pricing}). 
However, these MEC approaches do not account for online tasks' data distribution across servers, leading to severe overfitting and degraded performance on previous or other tasks.
This issue, known as catastrophic forgetting in continual learning, can severely degrade learning performance (e.g., \cite{mccloskey1989catastrophic,kirkpatrick2017overcoming,lin2022beyond}). 

Generalization error is a typical measure of how well a model performs on unseen data while retaining knowledge of previous ones (\!\!\cite{nadeau1999inference,ueda1996generalization}). 
Our theoretical analysis later validates that the existing MEC offloading strategies result in a large overall generalization error that increases over time. 
Therefore, it is necessary to design a new task offloading/routing strategy that allows different servers to specialize in specific types of tasks, thereby mitigating the generalization error in MEC networks.

\begin{figure}[t]
    \centering
    \captionsetup{font={footnotesize}}
    \includegraphics[width=0.4\textwidth]{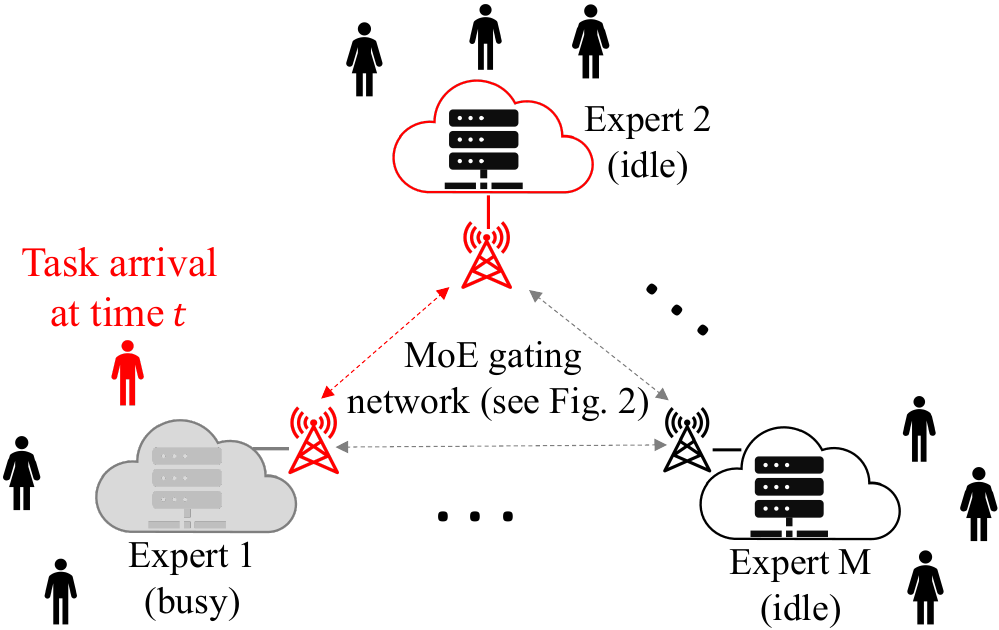}
    \footnotesize
    \caption{An illustration of MEC networks with $M$ edge servers as experts. At the beginning of time $t$, a mobile user arrives to request a task-training service from its nearest Base Station (BS) of expert $\tilde{m}_t$ (e.g., $\tilde{m}_t=1$ in this case). Then our adaptive MoE gating network in Fig.~\ref{fig:gating} selects one idle expert $m_t$ (e.g., $m_t=2$ in this case) out of $M$ experts and asks BS of expert $\tilde{m}_t$ to forward the task dataset to BS of the chosen expert $m_t$. After completing the task learning, the selected expert $m_t$ updates its local model and transmits the training result back to the mobile user via expert $\Tilde{m}_t$'s BS. Finally, the MoE updates the gating network for subsequent task use (see Fig.~\ref{fig:gating}).}
    \label{fig:MEC}
\end{figure}

To reduce the generalization error, it is natural and promising to apply the mixture-of-experts (MoE) model in MEC networks, where each edge server acts as an expert trained with specific tasks and data, as shown in Fig.~\ref{fig:MEC}.
In recent years, MoE has achieved significant success in deep learning, particularly in outperforming single learning expert in tasks involving Large Language Models (LLMs) (e.g., \cite{eigen2013learning,shazeer2016outrageously,riquelme2021scaling,du2022glam,gale2023megablocks}). 
By identifying and routing learning tasks to different experts through a gating network and a router, each expert in MoE specializes in specific knowledge within the data (\!\!\cite{fedus2022switch,chen2022towards,li2024theory}). 
For example, \cite{fedus2022switch} proposes the sparsified MoE model, where the router selects one expert at a time to handle the data of a task, achieving low computational costs while maintaining high model quality. 
Further, \cite{chen2022towards} theoretically analyzes the mechanism of MoE in deep learning within the context of a mixture of classification problems. 
These MoE studies are largely conducted offline, focusing on learning from a pre-collected dataset. 
In MEC networks, however, tasks arrive spontaneously, and each expert must return training results immediately. 
This online learning requirement poses additional challenges for the design of MoE theory and model in MEC.


There are only a few works leveraging MoE in MEC networks by training each expert to handle a particular set of tasks (e.g., \cite{rajbhandari2022deepspeed,singh2023hybrid,wang2024toward,yu2024moesys}). 
For instance, \cite{rajbhandari2022deepspeed} focuses on minimizing network delay by leveraging high-bandwidth connections to allocate experts efficiently. 
\cite{singh2023hybrid} optimizes communication to enhance the routing strategy of the gating network and eliminate unnecessary data movement.
\cite{wang2024toward} exploits the MEC structure to support MoE-based generative AI by optimally scheduling tasks to experts with varying computational resource limitations. 
However, all these studies center on experimental investigations and validation of their designs, leaving a big gap in the theoretical understanding of MoE and its design to guarantee the convergence of continual learning and generalization errors.

In this paper, we aim to bridge this gap by offering new theory and designs of MoE in MEC. To achieve this, we must tackle the following two key challenges.
\begin{itemize} 
    \item The first challenge is \emph{the MoE gating network design to accommodate MEC server availability and continual learning:} 
    In MEC networks, due to data transmission overhead and computation delay, once the MoE gating network selects an expert to handle task data, this expert becomes busy and unavailable until completing the current task.
    Consequently, the design of the MoE gating network and its routing algorithm need to adaptively consider the availability of all MEC servers.
    This requirement significantly departs from existing MoE works that assume all experts are available and can complete their assigned tasks promptly in every round (e.g., \cite{shazeer2016outrageously,fedus2022switch,du2022glam}). Further, the exisiting MoE studies focus on offline learning from a pre-collected dataset, while MEC tasks are dynamically generated to address online. 
    \item The second challenge is \emph{analysis difficulty on learning convergence and generalization error:} 
    Under the MEC features of server availability and continual learning, the updates of expert models and gating network parameters become asynchronous to ensure system convergence. 
    This makes the learning convergence analysis in MEC networks more complex than in the existing MoE models (e.g., \cite{chen2022towards,wang2022coscl}).
    Additionally, it is crucial to theoretically derive and guarantee the overall generalization error to a small constant through new MoE gating network design, which is missing in the current literature on MoE in MEC (e.g., \cite{rajbhandari2022deepspeed,singh2023hybrid,wang2024toward,yu2024moesys}). 
\end{itemize}

Our paper aims to overcome these challenges, and our main contributions and key novelty are summarized as follows.

\begin{itemize}
    \item \emph{New theory of mixture-of-experts (MoE) in mobile edge computing (MEC) networks:} 
    To the best of our knowledge, this paper is the first to introduce mixture-of-experts (MoE) theory in MEC networks and save MEC operation from the increasing generalization error over time. 
    We practically consider that data distributions of arriving tasks are unknown to the system (\Cref{section2}).
    Our MoE theory treats each MEC server as an expert and dynamically adapts to changes in server availability by considering data transfer and computation time. 
    Unlike existing MoE alogrithms designed for offline tasks (\!\!\cite{shazeer2016outrageously,fedus2022switch,chen2022towards}), ours is tailored for handling continuous streams of tasks in the MEC environment. 
    Furthermore, we provide new theoretical performance guarantee of MoE in MEC networks, which are absent in the existing empirical literature on MoE in MEC (\!\! \cite{rajbhandari2022deepspeed,singh2023hybrid,wang2024toward,yu2024moesys}). 
    \item \emph{Adaptive gating network (AGN) design in MoE for expert specialization in MEC:} 
    To address transmission and computation delays, in \Cref{section3}, we introduce an adaptive gating network in MEC to adaptively identify and route newly arrived tasks of unknown data distributions to available experts.
    After training the task at the chosen expert, we use the resulting model error as input to proactively update the gating network parameters for subsequent task arrivals, enabling each expert to specialize in a specific type of tasks after convergence. In \Cref{section4}, our analysis of task routing strategy derives a lower bound on the number of experts to ensure that at least one idle expert specializes in the current task arrival, thereby guaranteeing system convergence after sufficient training rounds for router exploration and expert learning.
    \item \emph{Proved Convergence to constant overall generalization error:} 
    In \Cref{section5}, we first derive that existing MEC's task offloading/routing solutions result in a large generalization error that increases over time. 
    Note that even for current MEC offloading solutions, there is no theoretical analysis on their generalization errors (e.g., \cite{guo2022distributed,zheng2023federated}). 
    After the learning convergence of our MoE model, we rigorously prove that our MoE solution's overall generalization error is upper bounded by a small constant that decreases over time.
    Interestingly, when the number of experts is sufficient to ensure convergence, adding more experts delays the convergence time and worsens the generalization error.
    The benefit of our MoE solution becomes more pronounced when the data distributions vary significantly across all task arrivals.
    Finally, in \Cref{section6}, we perform extensive experiments on real datasets in DNNs to verify our theoretical results.
\end{itemize}


\section{System Model and Problem Setting}\label{section2}

As shown in Fig.~\ref{fig:MEC}, a mobile edge computing (MEC) network operator manages a set $\mathbb{M}=\{1,\cdots,M\}$ of MEC servers or cloudlets as MoE experts, each colocated with a local Base Station (BS) to serve mobile users' continual ML tasks. 
These BSs are typically connected via high-speed fiber optic cables (\!\!\cite{anzola2021joint,ouyang2018follow}).
We consider a discrete time horizon $\mathbb{T}=\{1,\cdots, T\}$. 
In the following, we first introduce the MEC model with continual/online task arrivals, which will be shown to perform bad under the existing task offloading strategies. To improve the learning performance of the system, we then introduce our MoE model with adaptive routing strategy. 

\subsection{MEC Model under Continual Learning}
At the beginning of each time $t$, a mobile user randomly arrives to request task offloading to solve a machine learning problem from its nearest BS. 
Let $\Tilde{m}_t$ denote the nearest expert for the current task $t$ (e.g., $\Tilde{m}_t=1$ in Fig.~\ref{fig:MEC}). 
This user needs to upload its data, denoted by $\mathcal{D}_t=(\mathbf{X}_t,\mathbf{y}_t)$, to the BS of expert $\Tilde{m}_t$ for later task training. Here $\mathbf{X}_t\in\mathbb{R}^{p\times s}$ is the feature matrix with $s$ samples of $p$-dimensional vectors, and $\mathbf{y}_t\in\mathbb{R}^{s}$ is the output vector. 
Note that the data distribution of $\mathbf{X}_t$ is unknown to the MEC network operator. 
For different types of tasks, distributions of their feature matrix $\mathbf{X}_t$'s can vary significantly. While for tasks of the same type, their $\mathbf{X}_t$'s can be very similar. 
After uploading data $\mathcal{D}_t$, the MEC network operator selects one available expert out of $M$ experts, denoted by $m_t\in\mathbb{M}$ (e.g., $m_t=2$ in Fig.~\ref{fig:MEC}), and asks BS of expert $\Tilde{m}_t$ to forward the task dataset to the chosen expert $m_t$. 
Once completing the training of task $t$, expert $m_t$ updates its local model and outputs the learning result to the MEC network operator, and its BS then transmits the result back to the mobile user via expert $\tilde{m}_t$'s BS. 

In the above continual learning (CL) process, after the MEC network operator decides expert $m_t$ for handling task $t$, expert $\tilde{m}_t$ will forward the dataset of task $t$ to expert $m_t$ via its BS' fibre connection to expert $m_t$'s BS.
Thus, there will be a transmission delay for task $t$, which is denoted by $d^{tr}_{t}(m_t,\Tilde{m}_t)\in\{d^{tr}_l,\cdots, d^{tr}_u\}$. Here $d^{tr}_l\geq 0$ and $d^{tr}_u>0$ are the lower and upper bounds of $d^{tr}_{t}(m_t,\Tilde{m}_t)$ satisfying $d^{tr}_l<d^{tr}_u$. 
In MEC networks, this delay $d^{tr}_{t}(m_t,\Tilde{m}_t)$ includes two parts: the uplink data transmission time from the user to the BS $\tilde{m}_t$ and the communication time from BS of expert $\tilde{m}_t$ to BS of expert $m_t$.
Therefore, $d^{tr}_{t}(m_t,\Tilde{m}_t)$ is stochastic and related with the locations of experts $\Tilde{m}_t$ and $m_t$ and the uplink channel condition in BS $\tilde{m}_t$ (\!\!\cite{ouyang2018follow,yan2021pricing}). We practically model that it satisfies a general cumulative distribution function (CDF) distribution and can be different from the others.

For each expert in MEC networks, its computational resource or speed is limited. Consequently, in addition to the transmission delay $d^{tr}_{t}(m_t,\Tilde{m}_t)$, task $t$ takes expert $m_t$ execution time, denoted by $d^{ex}(m_t)\in\{d^{ex}_l,\cdots, d^{ex}_u\}$, to complete the training process. 
Here $d^{ex}_l\geq 0$ and $d^{ex}_u>0$ are the lower and upper bounds of $d^{ex}(m_t)$. 
Similarly, we practically model that $d^{ex}(m_t)$ of any expert $m_t$ satisfies a general CDF distribution. Note that the result return time from expert $m_t$ to user $t$ has no effect on the dynamics of our MoE gating network or routing system. 

In summary, the total time delay for transmitting and training task $t$ is
\begin{align}\label{time_delay}
    d_{t}(m_t,\Tilde{m}_t)=d^{tr}_{t}(m_t,\Tilde{m}_t)+d^{ex}(m_t), 
\end{align}
where $d_{t}(m_t,\Tilde{m}_t)\in\{d^{tr}_l+d^{ex}_l,\cdots, d^{tr}_u+d^{ex}_u\}$. To simplify the notations, we let $d_t=d_{t}(m_t,\Tilde{m}_t)$ and $d_u=d^{tr}_u+d^{ex}_u$ denote the actual total delay for task $t$ and the maximum total delay for any task $t$, respectively.

After being selected for transmitting dataset and training task $t$, expert $m_t$ will remain busy until completing the training process at time $t+d_t$\footnote{We can easily extend our MoE theory to consider that an expert can handle a number of tasks simultaneneously, by checking the residual computation capacity for routing task at each time $t$.}. 
In CL, an online task should be processed immediately and cannot wait a long time to start training (\!\!\cite{shakarami2020survey,gao2019winning}).
Therefore, we define $\gamma_t^{(m)}\in\{0,1\}$ as the binary service state of expert $m\in\mathbb{M}$ at time $t$:
\begin{align}\label{gamma_t}
    \gamma_t^{(m)}=\begin{cases}
        1, &\text{if expert $m$ is idle at time $t$},\\
        0, &\text{if expert $m$ is busy at time $t$.}
    \end{cases}
\end{align}
For example, in Fig.~\ref{fig:MEC}, expert 1 is busy at time $t$ with $\gamma_t^{(1)}=0$, meaning it cannot be selected by the MEC network operator for training the current task $t$. 

Given the availability constraint of $\gamma_t^{(m)}$, it is necessary for the MEC network operator to select appropriate available experts for continually arriving tasks.
However, since the data distribution of each task's feature matrix $\mathbf{X}_t$ (i.e., the task type) is unknown to the MEC network operator, the existing MEC offloading strategy, which always selects the nearest available expert or the most computation-efficient expert (\!\!\cite{ouyang2018follow,shakarami2020survey,gao2019winning,yan2021pricing}), may continually assign tasks with significantly different data distributions to the same expert. 
Our analysis later in \Cref{section5} demonstrates that this approach severely degrades the overall learning performance in CL.
To adaptively identify and route each task of unknown data distribution, we introduce our MoE model to the MEC network in the next subsection.

\subsection{Adaptive Mixture-of-Experts Model in MEC}\label{section2b}

In this subsection, we design the MoE model for the MEC network operator to select appropriate experts for online tasks in CL. 
To identify task types based on feature matrix $\mathbf{X}_t$, as in Fig.~\ref{fig:gating}, we employ a typical linear gating network as a binary classifier and an adaptive router for the MEC network operator (as in \cite{shazeer2016outrageously,fedus2022switch}). 
After the user uploads its task dataset $\mathcal{D}_t=(\mathbf{X}_t,\mathbf{y}_t)$ to its nearest BS $\Tilde{m}_t$ at time $t$, the routing and training of the MoE model illustrated in Fig.~\ref{fig:gating} contain five steps below.

\begin{figure}[t]
    \centering
    \captionsetup{font={footnotesize}}
    \includegraphics[width=0.35\textwidth]{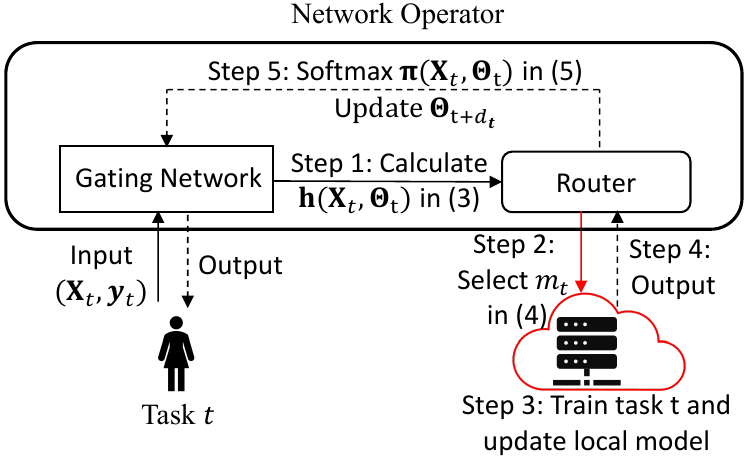}
    \caption{The MoE structure of the MEC network operator in Fig.~\ref{fig:MEC}, which contains a gating network and a router. 
    After a mobile user arrives and uploads its dataset $\mathcal{D}_t$ to the MEC network operator (step~1), the gating network computes its linear output $\mathbf{h}(\mathbf{X}_t,\mathbf{\Theta}_t)$ by (\ref{h_X_theta}) based on the input dataset $\mathcal{D}_t$ (step~2). Then, the router selects the best expert $m_t$ for training task $t$ by the adaptive strategy (\ref{m_t}), based on the gating output $\mathbf{h}(\mathbf{X}_t,\mathbf{\Theta}_t)$ (step~3). After completing the data training, expert $m_t$ updates its local model and    outputs its learning result back to the mobile user (step~4). Finally, the MEC network operator updates gating network parameter $\mathbf{\Theta}_{t+d_t}$ based on the learning result and the softmaxed value $\bm{\pi}(\mathbf{X}_t,\mathbf{\Theta}_t)$ derived in (\ref{softmax}) (step~5).}
    \label{fig:gating}
\end{figure}

\emph{Step 1:} First, the gating network uses $\mathbf{X}_t$ to compute the linear output, denoted by $h_m(\mathbf{X}_t,\bm{\theta}^{(m)}_t)$, for each expert $m\in\mathbb{M}$, where $\bm{\theta}^{(m)}_t\in\mathbb{R}^p$ is the gating network parameter of expert $m$. 
Define $\mathbf{\Theta}_t:=[\bm{\theta}_{t}^{(1)}\ \cdots \ \bm{\theta}_{t}^{(M)}]$
and
$\mathbf{h}(\mathbf{X}_t,\mathbf{\Theta}_t):=[h_1(\mathbf{X}_t,\bm{\theta}_{t}^{(1)})\ \cdots\ h_M(\mathbf{X}_t,\bm{\theta}_{t}^{(M)})]$
to be the parameters and the outputs of the gating network for all experts, respectively. Then we obtain
\begin{align}
    \mathbf{h}(\mathbf{X}_t,\mathbf{\Theta}_t)=\sum_{i\in[s]}\mathbf{\Theta}_t^\top\mathbf{X}_{t,i},\label{h_X_theta}
\end{align}
where $\mathbf{X}_{t,i}$ is the $i$-th sample of the feature matrix $\mathbf{X}_{t}$. 

\emph{Step 2:} Based on the diversified gating output $h_m(\mathbf{X}_t,\bm{\theta}^{(m)}_t)$ of each expert $m$, the router decides which expert to handle the current dataset $\mathcal{D}_t=(\mathbf{X}_t,\mathbf{y}_t)$.
Let $m_t$ denote the expert selected by the router for task $t$. 
To reduce the computational cost and sparsify the gating network, we employ “switch routing" introduced by \cite{fedus2022switch}.
At each time $t$, the router selects expert $m_t$ with the maximum gating network output $h_m(\mathbf{X}_t,\bm{\theta}^{(m)}_t)$, which satisfies
\begin{align}
    m_t=\arg\max_{m\in\mathbb{M},\gamma_m=1} \  \{h_m(\mathbf{X}_t,\bm{\theta}^{(m)}_t)+r_t^{(m)}\}, \label{m_t}
\end{align}
where $r_t^{(m)}=o(1)$ for any expert $m$ is a small random noise to explore different experts in case that some experts have the same gating network output, and $\gamma_m$ is the binary availability state defined in (\ref{gamma_t}). 
In (\ref{m_t}), we can only choose among those available experts with $\gamma_m=1$ in MEC, adding more difficulty in analyzing convergence and learning performance under asynchronous updates compared to existing MoE literature that assumes experts are always available and synchronous   (\!\!\cite{fedus2022switch,chen2022towards,li2024locmoe}).

\emph{Step 3:} After selecting expert $m_t$, the router forwards the dataset $\mathcal{D}_t=(\mathbf{X}_t,\mathbf{y}_t)$ to this expert. Then, this expert trains task $t$ and updates its own local model.

\emph{Step 4:}
Once completing the task training at time $t+d_t$, where $d_t$ is the random total time delay in (\ref{time_delay}), expert $m_t$ returns the learning result to user $t$ via BSs. 

\emph{Step 5:} Finally, after training task $t$, the router calculates the softmaxed gate outputs based on the gating outputs in (\ref{h_X_theta}), derived by
\begin{align}
    \pi_m(\mathbf{X}_t,\mathbf{\Theta}_t)=\frac{\text{exp}(h_m(\mathbf{X}_t,\bm{\theta}^{(m)}_{t}))}{\sum_{m'=1}^M\text{exp}(h_{m'}(\mathbf{X}_t,\bm{\theta}^{(m')}_{t}))},\quad \forall m\in\mathbb{M}.\label{softmax}
\end{align}
Then the MoE model exploits gradient descent to update the gating network parameter $\mathbf{\Theta}_{t+d_{t}}$ for all experts, based on the training loss caused by by expert $m_t$ and the softmaxed values
$\bm{\pi}(\mathbf{X}_t,\mathbf{\Theta}_t)=[\pi_1(\mathbf{X}_t,\mathbf{\Theta}_t) \ \cdots \ \pi_M(\mathbf{X}_t,\mathbf{\Theta}_t)]$. 

The above routing and training process repeats for each continual task $t$. In this process, we aim to diversify $\bm{\theta}_t^{(m)}$ for each expert to specialize in a specific task, enabling the router to assign tasks of the same type to each expert according to (\ref{m_t}).
However, for our MoE model to fit MEC, we cannot adopt the vanilla training process from the existing MoE works, as they typically update $\mathbf{\Theta}_t$ offline using pre-collected datasets (e.g., \cite{shazeer2016outrageously,fedus2022switch,chen2022towards}). 
Our analysis later in \Cref{section3} demonstrates that their update rules do not guarantee learning convergence of $\mathbf{\Theta}_t$ in CL. 
Therefore, we need new designs to train the gating network parameter effectively.

\section{Our Training Process and Algorithm Design for MoE-enabled MEC}\label{section3}
In this section, we describe the training process of our MoE model in \Cref{section2b}. We first present the update rule for the local model of each expert (in Step 3 in Fig.~\ref{fig:gating}), focusing on the standard problems of overparameterized linear regression. Subsequently, we use the updated local expert model as input to proactively update the gating network parameters (Step 5 in Fig.~\ref{fig:gating}) according to our new designs of the loss function and the training algorithm in MEC.

In the following, for a vector $\bm{w}$, we use $\|\bm{w}\|_2$ and $\|\bm{w}\|_{\infty}$ to denote its $\ell$-$2$ and $\ell$-${\infty}$ norms, respectively. As in the MoE literature (e.g., \cite{chen2022towards,li2023theoretical}), for some positive constant $c_1$ and $c_2$, we define $x=\Omega(y)$ if $x>c_2|y|$, $x=\Theta(y)$ if $c_1|y|<x<c_2|y|$, and $x=\mathcal{O}(y)$ if $x<c_1|y|$. We also denote by $x=o(y)$ if $x/y\rightarrow 0$.

\subsection{Update of Local Expert Models for CL}\label{section3a}
Before introducing the update rules of local expert models, we first present the continual task model in our problem.
For each task from a mobile user, we consider fitting a linear model $f(\mathbf{X})=\mathbf{X}^\top \bm{w}$ with ground truth $\bm{w}\in \mathbb{R}^p$ as in the CL literature \cite{evron2022catastrophic,lin2023theory}.
For the task arrival in the $t$-th time slot, it corresponds to a linear regression problem.
For continual learning, overparameterization is more challenging to handle than underparameterization as it can lead to more subtle and complex issues such as overfitting. 
Therefore, in this study, we focus on the overparameterized regime with $s<p$, in which there are numerous linear models that can perfectly fit the data. 
Note that linear regression problems are fundamental in studying ML problems, and we will extend our designs and theoretical insights to DNNs via real-data experiments later in \Cref{section6}.

For the tasks of all mobile users throughout the $T$ time horizon, their ground truths can be classified into $N$ clusters based on task similarity. We require $N<M$ to guarantee the learning convergence of the MoE model, yet we allow the actual task number $T$ to far exceed the expert number $M$. 
Let $\mathcal{W}_n$ denote the $n$-th ground-truth cluster. 
For any two ground truths $\bm{w}_{t}$ and $\bm{w}_{t'}$ in the same cluster $\mathcal{W}_n$, where $t\neq t'$, they should have a higher similarity compared to two ground truths in different clusters. 
Similar to the existing machine learning literature (e.g., \cite{li2023theoretical}), we make the following assumption for any ground truth.
\begin{assumption}\label{assumption_1}
For any two ground different truths, we assume that they satisfy
\begin{align}
    \|\bm{w}_t-\bm{w}_{t'}\|_{\infty}=\begin{cases}
        \mathcal{O}(\sigma_0^{2}),&\text{if }\bm{w}_{t},\bm{w}_{t'}\in \mathcal{W}_n,\\
        \Theta(\sigma_0),&\text{otherwise,}
    \end{cases}\label{w_norm}
\end{align}
where $\sigma_0\in(0,1)$ denotes the variance of ground truths' elements. Moreover, we assume that each ground truth $\bm{w}_t$ possesses a unique feature signal $\bm{v}_t\in\mathbb{R}^d$ with $\|\bm{v}_t\|_{\infty}=\mathcal{O}(1)$.
\end{assumption}

Note that Assumption~\ref{assumption_1}, as specified in (\ref{w_norm}), ensures a significant difference between any two tasks from different clusters. This distinction makes the design of our MoE algorithm more challenging compared to existing MoE literature, which typically assumes smaller task differences (e.g., \cite{fedus2022switch, chen2022towards}).
We will verify the existence of $\sigma_0$ in real datasets later in \Cref{section6}.
Based on Assumption~\ref{assumption_1}, we define the mobile users' generation of dataset for each task $t$.

\begin{definition}\label{def:feature_structure}
At the beginning of each time slot $t\in[T]$, the dataset $\mathcal{D}_t=(\mathbf{X}_{t},\mathbf{y}_{t})$ of the new task arrival is generated by the following steps:

1) Independently generate a random variable $\beta_t\in(0,C]$, where $C$ is a constant satisfying $C=\mathcal{O}(1)$.

2) Generate feature matrix $\mathbf{X}_t$ as a collection of $s$ samples, where one sample is given by $\beta_t \bm{v}_{t}$ and the rest of the $s-1$ samples are drawn from normal distribution $\mathcal{N}(\bm{0},\sigma_t^2\bm{I}_p)$, where $\sigma_t\geq 0$ is the noise level.

3) Generate the output to be $\mathbf{y}_t=\mathbf{X}_t^\top\bm{w}_{t}$.
\end{definition}

According to Assumption~\ref{assumption_1} and \Cref{def:feature_structure}, task $t$ can be classified into one of $N$ clusters based on its feature signal $\bm{v}_t$. 
Since the position of vector $\bm{v}_t$ in feature matrix $\mathbf{X}_t$ is unknown for the MEC network operator, we address this classification sub-problem over $\mathbf{X}_t$ using the adaptive gating network that we introduced in \Cref{section2b}. 

Based on the continual task model, we now introduce the update rules of local expert models.
Let $\bm{w}_t^{(m)}$ denote the local model of expert $m$ at the beginning of $t$-th time slot, where we initialize each model to be zero, i.e., $\bm{w}_0^{(m)}=0$, for any expert $m\in\mathbb{M}$.
Once the router determines expert $m_t$ by the adaptive routing strategy in (\ref{m_t}), it transfers the dataset $\mathcal{D}_t=(\mathbf{X}_t,y_t)$ to this expert for updating $\bm{w}^{(m_t)}_t$. 
Then, expert $m_t$ update its model to $\bm{w}_{t+d_t}^{(m_t)}$ after a random time delay of $d_t$ in (\ref{time_delay}). 
For any other unselected idle expert $m\in \mathbb{M}$ (i.e., $m\neq m_t$), its model $\bm{w}_{t+d_t}^{(m)}$ remains unchanged from latest $\bm{w}_{t+d_t-1}^{(m)}$.

To determine the update rules of $\bm{w}_t^{(m)}$, we need to define the loss function for the rules to minimize the training loss. 
For each task $t$, we define the training loss as the mean-squared error (MSE) with respect to dataset $\mathcal{D}_t$:
\begin{align}
    \mathcal{L}^{tr}_t(\bm{w}_{t+d_t}^{(m_t)},\mathcal{D}_t)=\frac{1}{s}\|(\mathbf{X}_t)^{\top}\bm{w}_{t+d_t}^{(m_t)}-\mathbf{y}_t\|_2^2.\label{training_loss}
\end{align}
Since we focus on the overparameterized regime, there are infinitely many solutions that perfectly make $\mathcal{L}^{tr}_t(\bm{w}_{t+d_t}^{(m_t)},\mathcal{D}_t)=0$ in (\ref{training_loss}).
Among these solutions of $\bm{w}_{t+d_t}^{(m_t)}$, gradient descent (GD) provides a unique solution, which starts from the previous expert model $\bm{w}_{t-1}^{(m_t)}$ at the convergent point, for minimizing $\mathcal{L}^{tr}_t(\bm{w}_{t+d_t}^{(m_t)},\mathcal{D}_t)$ in (\ref{training_loss}). 
According to \cite{gunasekar2018characterizing,evron2022catastrophic,lin2023theory}, this solution is determined by the following optimization problem:
\begin{align}
    \min_{\bm{w}}&\ \ \|\bm{w}-\bm{w}_{t-1}^{(m_t)}\|_2, \label{optimization_SGD} \\
    \text{s.t.}&\ \ \mathbf{X}_t^\top \bm{w}=\mathbf{y}_t.\notag 
\end{align}

Solving (\ref{optimization_SGD}), we derive the update rule of each expert $m$ in the MoE model for task $t$ in the following lemma.
\begin{lemma}\label{lemma:update_w}
For the selected expert $m_t$, after completing task~$t$ at time $t+d_t$, its expert model is updated to be (\!\!\cite{gunasekar2018characterizing,evron2022catastrophic,lin2023theory}):
\begin{align}
    \bm{w}_{t+d_t}^{(m_t)}=\bm{w}_{t+d_t-1}^{(m_t)}+\mathbf{X}_t(\mathbf{X}_t^\top \mathbf{X}_t)^{-1}(\mathbf{y}_t-\mathbf{X}_t^{\top}\bm{w}_{t+d_t-1}^{(m_t)}).\label{update_wt}
\end{align}
While for any other expert $m\neq m_t$, we keep its model unchanged at time $t+d_t$, i.e., 
\begin{align}
    \bm{w}_{t+d_t}^{(m)}=\bm{w}_{t+d_t-1}^{(m)},\quad \forall m\in\mathbb{M} \text{ and } m\neq m_t.\label{noupdate_wt}
\end{align}
\end{lemma}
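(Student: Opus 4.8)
The plan is to obtain the update rule (\ref{update_wt}) by explicitly solving the constrained minimum-norm program (\ref{optimization_SGD}), since the cited implicit-bias results (\!\!\cite{gunasekar2018characterizing,evron2022catastrophic,lin2023theory}) already guarantee that gradient descent warm-started at the previous expert model converges to the solution of this program. First I would reduce (\ref{optimization_SGD}) to a canonical least-norm problem through the change of variable $\bm{u}=\bm{w}-\bm{w}_{t+d_t-1}^{(m_t)}$, which turns the objective into $\min_{\bm{u}}\|\bm{u}\|_2$ and rewrites the affine constraint $\mathbf{X}_t^\top\bm{w}=\mathbf{y}_t$ as $\mathbf{X}_t^\top\bm{u}=\mathbf{y}_t-\mathbf{X}_t^\top\bm{w}_{t+d_t-1}^{(m_t)}=:\bm{b}$.

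Next I would solve this canonical problem by the standard orthogonality argument. Decomposing any feasible direction into a component in the column space of $\mathbf{X}_t$ and a component in its orthogonal complement (the null space of $\mathbf{X}_t^\top$), any null-space component leaves $\mathbf{X}_t^\top\bm{u}=\bm{b}$ satisfied while strictly increasing $\|\bm{u}\|_2$; hence the optimizer must satisfy $\bm{u}^\star=\mathbf{X}_t\bm{\lambda}$ for some $\bm{\lambda}\in\mathbb{R}^s$. Substituting into the constraint gives $\mathbf{X}_t^\top\mathbf{X}_t\bm{\lambda}=\bm{b}$, so $\bm{\lambda}=(\mathbf{X}_t^\top\mathbf{X}_t)^{-1}\bm{b}$ and $\bm{u}^\star=\mathbf{X}_t(\mathbf{X}_t^\top\mathbf{X}_t)^{-1}\bm{b}$. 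Undoing the substitution recovers exactly (\ref{update_wt}). Equivalently, one can form the Lagrangian $\tfrac12\|\bm{w}-\bm{w}_{t+d_t-1}^{(m_t)}\|_2^2+\bm{\lambda}^\top(\mathbf{X}_t^\top\bm{w}-\mathbf{y}_t)$ and eliminate $\bm{\lambda}$ through the KKT stationarity condition, which yields the same closed form.

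Two points then need care to close the argument. The invertibility of $\mathbf{X}_t^\top\mathbf{X}_t$ must be justified: since $\mathbf{X}_t\in\mathbb{R}^{p\times s}$ with $s<p$ and its $s-1$ noise columns are drawn from $\mathcal{N}(\bm{0},\sigma_t^2\bm{I}_p)$ in \Cref{def:feature_structure}, the columns are linearly independent almost surely, so $\mathbf{X}_t$ has full column rank $s$ and $\mathbf{X}_t^\top\mathbf{X}_t\in\mathbb{R}^{s\times s}$ is nonsingular. I also need to reconcile the warm start $\bm{w}_{t-1}^{(m_t)}$ appearing in (\ref{optimization_SGD}) with $\bm{w}_{t+d_t-1}^{(m_t)}$ in (\ref{update_wt}): because expert $m_t$ is busy throughout $[t,t+d_t)$ and is never re-selected during this window, rule (\ref{noupdate_wt}) freezes its model so that $\bm{w}_{t-1}^{(m_t)}=\bm{w}_{t+d_t-1}^{(m_t)}$, making the two initializations identical. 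Finally, (\ref{noupdate_wt}) itself is immediate from the construction, since an unselected expert $m\neq m_t$ receives no data and performs no gradient step at time $t+d_t$. The only genuine obstacle is this invertibility and index bookkeeping; the minimum-norm derivation itself is routine once the cited gradient-descent-to-interpolation equivalence is invoked.
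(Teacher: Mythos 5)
Your proposal is correct: the paper itself gives no proof of this lemma (it explicitly defers to the cited implicit-bias references), and your derivation via the change of variable, the orthogonality/KKT argument for the minimum-norm solution, and the almost-sure invertibility of $\mathbf{X}_t^\top\mathbf{X}_t$ is exactly the standard argument those references supply. Your index-bookkeeping observation that $\bm{w}_{t-1}^{(m_t)}=\bm{w}_{t+d_t-1}^{(m_t)}$ because the busy expert is frozen also matches the remark the paper makes immediately after the lemma.
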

The proof of \Cref{lemma:update_w} is given in \cite{gunasekar2018characterizing,evron2022catastrophic,lin2023theory} so we skip it here.
Note that from time $t$ to $t+d_t-1$, expert $m_t$ is busy and has no update to its model, such that we have
\begin{align*}
    \bm{w}^{(m_t)}_{t+d_t-1}=\bm{w}^{(m_t)}_{t+d_t-2}=\cdots=\bm{w}^{(m_t)}_{t}.
\end{align*}
However, the other experts may have updates after completing their respective tasks during this period.

\subsection{Training Algorithm Design of Gating Parameters}
After completing the update of local expert model $\bm{w}_{t+d_t}^{(m_t)}$ of expert $m_t$ in Step 3 of Fig.~\ref{fig:gating}, we can use it as input to proactively update the gating network parameter $\mathbf{\Theta}_{t+d_t+1}$ for all experts. 
Our objective for $\bm{\theta}_{t+d_t+1}^{(m)}$ is to enable each expert $m$ specialize in a specific cluster of tasks, aiding the router in selecting the correct expert for each task and reducing learning loss. 
To accomplish this, other than the training loss in (\ref{training_loss}), we follow the existing MoE literature (e.g, \cite{fedus2022switch,li2024locmoe}) to propose another locality loss based on the model error of updating expert model $\bm{w}_{t+d_t}^{(m)}$ in (\ref{update_wt}), which is necessary for training and diversifying $\mathbf{\Theta}_{t+d_t+1}$.
\begin{definition}\label{def:locality_loss}
Given gating network parameter $\mathbf{\Theta}_t$ at time $t$, the locality loss of experts caused by training task $t$ is:
\begin{align}
    \mathcal{L}_t^{loc}(\bm{w}_{t+d_t}^{(m_t)},\mathbf{\Theta}_t)=\sum_{m\in\mathbb{M}}\pi_{m}(\mathbf{X}_t,\bm{\Theta}_t)\|\bm{w}_{t+d_t}^{(m)}-\bm{w}_{t+d_t-1}^{(m)}\|_2,\label{locality_loss}
\end{align}
where $\pi_{m}(\mathbf{X}_t,\bm{\Theta}_t)$ is the softmax value of expert $m$ derived at time $t$ by (\ref{softmax}). 
\end{definition}
By observing (\ref{update_wt}), the locality loss $\mathcal{L}_t^{loc}(\bm{w}_{t+d_t}^{(m_t)},\mathbf{\Theta}_t)$ in (\ref{locality_loss}) is minimized when the MoE routes tasks within the same ground-truth cluster $\mathcal{W}_n$ in (\ref{w_norm}) to the same experts. Therefore, $\mathcal{L}_t^{loc}(\bm{w}_{t+d_t}^{(m_t)},\mathbf{\Theta}_t)$ helps accelerate diversifying gating network parameters for all experts. Without (\ref{locality_loss}), $\bm{\theta}^{(m)}_t$ of each expert $m\in\mathbb{M}$ cannot be efficiently diversified to specialize in different types of tasks.

Based on (\ref{training_loss}) and (\ref{locality_loss}), we finally define the task loss objective for each task $t$ as follows:
\begin{align}\label{task_loss}
    \mathcal{L}_t(\bm{w}_{t+d_t}^{(m_t)},\mathbf{\Theta}_t,\mathcal{D}_t)=\mathcal{L}^{tr}_t(\bm{w}_{t+d_t}^{(m_t)},\mathcal{D}_t)+\mathcal{L}_t^{loc}(\bm{w}_{t+d_t}^{(m_t)},\mathbf{\Theta}_t).
\end{align}
Commencing from the initialization $\mathbf{\Theta}_0$, the gating network parameter is updated based on gradient descent:
\begin{align}
    \bm{\theta}_{t+d_t+1}^{(m)}=\bm{\theta}_{t+d_t}^{(m)}-\eta \cdot \nabla_{\bm{\theta}_t^{(m)}}  \mathcal{L}_t(\bm{w}_{t+d_t}^{(m_t)},\mathbf{\Theta}_t,\mathcal{D}_t),\ \forall m\in\mathbb{M}, \label{update_theta}
\end{align}
where $\eta>0$ is the learning rate. Note that $\bm{w}_{t+d_t}^{(m_t)}$ in (\ref{update_wt}) is also the optimal solution for minimizing the task loss in (\ref{task_loss}), as the locality loss is derived after updating $\bm{w}_{t+d_t}^{(m_t)}$ in (\ref{update_wt}).
This is adopted by the existing MoE literature (e.g., \cite{fedus2022switch,chen2022towards}) for offline task training.

\begin{algorithm}[t]
\caption{Adaptive routing and training for MoE gating network}
\small
\label{algo:update_MoE}
\begin{algorithmic}[1]
\STATE \textbf{Input:} $T,\sigma_0, \delta=o(1)$;\
\STATE Initialize $\bm{\theta}^{(m)}_0=\bm{0}$ and $\bm{w}_0^{(m)}=\bm{0}$, $\forall m\in\mathbb{M}$;\
\FOR{$t=1,\cdots, T$}
\STATE Input dataset $\mathcal{D}_t=(\mathbf{X}_t,\mathbf{y}_t)$;\
\STATE Generate noise $r_t^{(m)}$ for each expert $m\in\mathbb{M}$;\
\STATE Select expert $m_t$ according to (\ref{m_t}) and transmit dataset $\mathcal{D}_t$ to expert $m_t$;\ \label{line_routing}
\STATE Update local expert model $\bm{w}^{(m_t)}_{t+d_t}$ by (\ref{update_wt}) after completing task~$t$;\  \label{line_expert_update}
\STATE Update gating network parameter $\bm{\theta}^{(m)}_{t}$ as in (\ref{update_theta}) for any expert $m\in\mathbb{M}$;\ \label{line_update_theta}
\ENDFOR
\end{algorithmic}
\end{algorithm}

Based on the system settings above, we propose adaptive routing and training \Cref{algo:update_MoE}.
According to \Cref{algo:update_MoE}, at any time $t\in\mathbb{T}$, the MoE model selects the best expert $m_t$ using the adaptive routing strategy (\ref{m_t}) (in Line~\ref{line_routing} of \Cref{algo:update_MoE}) and updates the local model $\bm{w}^{(m_t)}_{t+d_t}$ of expert $m_t$ according to (\ref{update_wt}) (in Line~\ref{line_expert_update}). 
Following that, our MoE updates the gating network parameters to diversify $\bm{\theta}_t^{(m)}$ for any expert $m\in\mathbb{M}$ (in Line~\ref{line_update_theta}). The above process repeats until the arrival of the last task $T$.

\section{Theoretical Guarantee for Learning Convergence of our MoE Design}\label{section4}

In our MoE model in MEC, achieving learning convergence is critical, ensuring that each expert stabilizes to specialize in specific types of tasks.
Upon convergence, the MoE significantly reduces the training loss for future tasks in continual learning.
Therefore, in this section, we derive the lower bound of the expert number required to guarantee the learning convergence of MoE, given the server availability constraint in MEC. 
Based on this result, we prove the learning convergence of \Cref{algo:update_MoE}.

Before analyzing the learning convergence, we first propose the following lemma to mathematically  characterize how the gating network identify task types based on the gating network output of each expert in (\ref{h_X_theta}). 
\begin{lemma}\label{lemma:h-h_hat}
For two feature matrices $\mathbf{X}$ and $\Tilde{\mathbf{X}}$, if their ground truths $\bm{w},\tilde{\bm{w}}\in\mathcal{W}_{n}$ are in the same cluster, with probability at least $1-o(1)$, their corresponding gating network outputs of the same expert $m$ satisfy
\begin{align}\label{|hm-hm_tilde|}
\big|h_m(\mathbf{X},\bm{\theta}_t^{(m)})-h_m(\Tilde{\mathbf{X}},\bm{\theta}_t^{(m)})\big|=\mathcal{O}(\sigma_0).
\end{align}
\end{lemma}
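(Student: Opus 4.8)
The plan is to work directly from the linear form of the gating output in (\ref{h_X_theta}) together with the explicit data-generating mechanism in \Cref{def:feature_structure}. First I would write $h_m(\mathbf{X},\bm{\theta}_t^{(m)}) = (\bm{\theta}_t^{(m)})^\top \sum_{i\in[s]} \mathbf{X}_i$ and substitute the decomposition of each feature matrix into its single signal column $\beta\bm{v}$ and its $s-1$ Gaussian noise columns drawn from $\mathcal{N}(\bm{0},\sigma^2\bm{I}_p)$. Summing the columns gives $\sum_{i\in[s]}\mathbf{X}_i = \beta\bm{v}+\bm{\zeta}$ with a Gaussian vector $\bm{\zeta}\sim\mathcal{N}(\bm{0},(s-1)\sigma^2\bm{I}_p)$, and analogously $\tilde\beta\tilde{\bm{v}}+\tilde{\bm{\zeta}}$ for $\tilde{\mathbf{X}}$. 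Using linearity, I then decompose the difference as
\begin{align*}
h_m(\mathbf{X},\bm{\theta}_t^{(m)}) - h_m(\tilde{\mathbf{X}},\bm{\theta}_t^{(m)}) = \underbrace{(\bm{\theta}_t^{(m)})^\top(\beta\bm{v}-\tilde\beta\tilde{\bm{v}})}_{\text{signal term}} + \underbrace{(\bm{\theta}_t^{(m)})^\top(\bm{\zeta}-\tilde{\bm{\zeta}})}_{\text{noise term}},
\end{align*}
so that bounding the left-hand side reduces, via the triangle inequality, to bounding these two pieces separately.

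For the signal term I would invoke the cluster structure: since $\bm{w},\tilde{\bm{w}}\in\mathcal{W}_n$, \Cref{assumption_1} together with the fact that the cluster label is determined by the feature signal (as noted after \Cref{def:feature_structure}) implies the same-cluster signals agree up to $\mathcal{O}(\sigma_0)$. Combined with $\beta,\tilde\beta=\mathcal{O}(1)$ and a norm bound on the evolving gating parameter $\bm{\theta}_t^{(m)}$ (which starts at $\bm{0}$ and grows in a controlled way under the update (\ref{update_theta})), the projection onto $\bm{\theta}_t^{(m)}$ contributes only $\mathcal{O}(\sigma_0)$. For the noise term, conditioning on $\bm{\theta}_t^{(m)}$, the quantity $(\bm{\theta}_t^{(m)})^\top(\bm{\zeta}-\tilde{\bm{\zeta}})$ is a zero-mean Gaussian whose variance scales with $\|\bm{\theta}_t^{(m)}\|_2^2$ and the two noise levels, so a standard Gaussian-tail (equivalently sub-Gaussian/Hoeffding) bound shows it stays within $\mathcal{O}(\sigma_0)$ with probability at least $1-o(1)$. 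This concentration step is precisely the source of the $1-o(1)$ confidence in the statement, and summing the two $\mathcal{O}(\sigma_0)$ contributions yields (\ref{|hm-hm_tilde|}).

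The main obstacle I anticipate is the signal term rather than the noise term. Reconciling a signal $\beta\bm{v}$ of order $\mathcal{O}(1)$ with a target bound of order $\mathcal{O}(\sigma_0)$ forces me to argue carefully that, \emph{within} a single cluster, the signal columns' contribution to the gating output is essentially invariant to the particular task; this requires both that the same-cluster feature signals are close and that the random scalars $\beta,\tilde\beta$ enter only through a projection onto $\bm{\theta}_t^{(m)}$ that is itself $\mathcal{O}(\sigma_0)$-controlled. Making this quantitative means tracking a norm bound on $\bm{\theta}_t^{(m)}$ across the \emph{asynchronous} gradient-descent updates in (\ref{update_theta}), which is the delicate part of the argument; by contrast, the Gaussian concentration controlling the noise term is routine once its variance has been identified.
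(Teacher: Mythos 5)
Your proposal matches the paper's proof in both structure and substance: the paper likewise uses the linearity of $h_m$ to split $\sum_{i\in[s]}\mathbf{X}_{t,i}$ into the signal column and the $s-1$ Gaussian noise columns, controls the noise contribution via Hoeffding's inequality (this is exactly where the $1-o(1)$ probability comes from), and absorbs the signal difference using the same-cluster closeness of the feature signals. The only cosmetic difference is that the paper routes the comparison through an intermediate ``signal-only'' matrix $\tilde{\mathbf{X}}_t=[\beta_t\bm{v}_{n_t}\ 0\ \cdots\ 0]$ rather than subtracting the two matrices directly, and it silently drops the random scalars $\beta_t$ from the signal term --- the very point you flag as the delicate step --- so your version is, if anything, the more honest account of the same argument.
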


The proof of \Cref{lemma:h-h_hat} is given in \Cref{proof_lemma:h-h_hat}.
\Cref{lemma:h-h_hat} indicates that for any feature matrices with the same feature signal, their gating network outputs exhibit a very small gap of $\mathcal{O}(\sigma_0)$ in (\ref{|hm-hm_tilde|}). This occurs because, according to \Cref{def:feature_structure}, all the other $s-1$ samples of $\mathbf{X}_{t,i}$, except for feature signal $\bm{v}_t$, have a mean value of $\bm{0}$.
Leveraging \Cref{lemma:h-h_hat}, the router can effectively identify the type of each task $t$ based on the gating network output of each expert. 

Motivated by \Cref{lemma:h-h_hat}, given $N$ ground-truth clusters in (\ref{w_norm}), we can classify all experts into $N$ expert sets based on their specialty, where each set $\mathcal{M}_n$ is defined as: 
\begin{align}\label{M_n}
    \mathcal{M}_n=\Big\{m\in\mathbb{M}\Big| &(\bm{\theta}_t^{(m)})^\top \bm{v}_i>(\bm{\theta}_t^{(m)})^\top \bm{v}_j,\\ &\forall \bm{w}_i\in\mathcal{W}_n, \bm{w}_j\notin\mathcal{W}_n\Big\}.\notag
\end{align}

Unlike traditional MoE literature, which assumes all experts are always available (e.g, \cite{fedus2022switch,chen2022towards,li2024locmoe}), each expert server in our context may be occupied or unavailable at any time $t$ due to the random total time delay $d_t$ in (\ref{time_delay}) for transmitting and training. 
In extreme cases, it is possible for a task to miss any previously trained expert of the same type. 
Consequently, we need a lower bound on the number of experts $M$ to ensure the convergence of the MoE model. 
This guarantees that the router can always select an available expert specializing in the new task $t$. 
\begin{proposition}\label{lemma:M_th}
As long as $M=\Omega(NM_{th}\ln(\frac{1}{\delta}))$, where
\begin{align}\label{M_th}
    M_{th}=\frac{d_u}{N}+\Phi^{-1}(1-\delta)\sqrt{\frac{d_u}{N}(1-\frac{1}{N})}
\end{align}
with the standard normal quantile $\Phi^{-1}(\cdot)$ and $\delta=o(1)$, for any task $t$ with $\bm{w}_t\in\mathcal{W}_n$ arrives after the system convergence, with probability at least $1-o(1)$, there always exists an idle expert $m\in \mathcal{M}_{n}$ with $\gamma_t^{(m)}=1$ in (\ref{gamma_t}) to provide the correct type of expertise for that task.
\end{proposition}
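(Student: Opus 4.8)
The plan is to recast the statement as an occupancy (balls-into-bins-with-departures) problem. I would first invoke the post-convergence structure: by the definition of $\mathcal{M}_n$ in (\ref{M_n}) together with \Cref{lemma:h-h_hat}, once the gating parameters have stabilized the router sends every type-$n$ task (every arrival with $\bm{w}_t\in\mathcal{W}_n$) only to experts in $\mathcal{M}_n$, and each such expert processes nothing but type-$n$ tasks. The central reduction is then to bound how many experts in $\mathcal{M}_n$ can be simultaneously busy at the arrival time $t$. Since by (\ref{time_delay}) any task occupies its chosen expert for at most $d_u$ time slots, an expert in $\mathcal{M}_n$ is busy at time $t$ only if it was assigned a type-$n$ task during the window of the $d_u$ slots immediately preceding $t$. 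Hence the number of busy experts in $\mathcal{M}_n$ is upper bounded by $K_n$, the number of type-$n$ arrivals in that length-$d_u$ window, and it suffices to guarantee $|\mathcal{M}_n|>K_n$ with the stated probability so that at least one expert has $\gamma_t^{(m)}=1$ in (\ref{gamma_t}).

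Next I would model $K_n$ probabilistically. Treating the cluster label of each arrival as independent across the $d_u$ slots and equal to type $n$ with probability $\tfrac{1}{N}$ (the uniform-over-clusters regime), $K_n$ is distributed as $\mathrm{Binomial}(d_u,\tfrac{1}{N})$, with mean $\tfrac{d_u}{N}$ and variance $\tfrac{d_u}{N}(1-\tfrac{1}{N})$. Applying a normal (Berry--Esseen / CLT) approximation to the binomial tail, for $\delta=o(1)$ one obtains
\[
\Pr\!\Big(K_n>\tfrac{d_u}{N}+\Phi^{-1}(1-\delta)\sqrt{\tfrac{d_u}{N}\big(1-\tfrac{1}{N}\big)}\Big)\le\delta,
\]
and the right-hand threshold is exactly $M_{th}$ in (\ref{M_th}). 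Consequently, whenever $|\mathcal{M}_n|>M_{th}$, with probability at least $1-\delta=1-o(1)$ at least one expert in $\mathcal{M}_n$ is idle, which is the single-cluster version of the claim.

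Finally, to reach the full statement I would require every $\mathcal{M}_n$ to exceed $M_{th}$ simultaneously. Because the partition $\{\mathcal{M}_n\}$ is itself produced by the random router exploration during convergence, I would control the least-loaded type via a union bound over the $N$ clusters combined with a Chernoff/balls-into-bins concentration on how the $M$ experts spread across the $N$ types; demanding $M=\Omega(NM_{th}\ln(\tfrac{1}{\delta}))$ is what forces the minimum cluster size above $M_{th}$ with probability $1-o(1)$, the extra $\ln(\tfrac{1}{\delta})$ factor absorbing both the union bound and the lower tail of the least-loaded type. The main obstacle I anticipate is precisely this coupling: the occupancy argument presumes post-convergence specialization and near-independent uniform arrivals, yet the partition $\{\mathcal{M}_n\}$ and the idleness pattern $\{\gamma_t^{(m)}\}$ are generated by the same delayed, asynchronous routing dynamics. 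Making rigorous the independence used in the binomial model --- for instance by conditioning on the converged partition and invoking a renewal/stationarity argument for the overlapping busy periods, and by replacing the raw CLT with a genuine Chernoff bound valid in the $\delta=o(1)$ regime --- is where the real work lies.
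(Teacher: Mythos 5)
Your proposal follows essentially the same route as the paper's proof: bound the number of busy experts in $\mathcal{M}_n$ by the number of type-$n$ arrivals in the preceding $d_u$ slots, model that count as $\mathrm{Binomial}(d_u,\tfrac{1}{N})$ and apply a normal tail approximation to obtain $M_{th}$, then use a Chernoff--Hoeffding bound on the sizes $|\mathcal{M}_n|$ to show $M=\Omega(NM_{th}\ln(\tfrac{1}{\delta}))$ suffices. Your closing remarks on the independence and coupling issues are in fact more careful than the paper, which simply asserts the binomial model and omits the union bound over clusters.
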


The proof of \Cref{lemma:M_th} is given in \Cref{proof_lemma:M_th}.
Next, we demonstrate that our \Cref{algo:update_MoE} works effectively to diversify the gating network parameter for each expert within the adaptive gateway network.

\begin{proposition}[Router's Convergence]\label{prop:exploration}
Under \Cref{algo:update_MoE}, for any task arrival $t>T_1$ with $\bm{w}_t\in \mathcal{W}_n$, where $T_1=d_u+\lceil \eta^{-1}\sigma_0^{-0.5}M\ln(\frac{M}{\delta})\rceil$, with probability at least $1-o(1)$, we obtain
\begin{align}\label{hm-hm'}
    &\|\pi_m(\mathbf{X}_t,\bm{\theta}_t^{(m)})-\pi_{m'}(\mathbf{X}_t,\bm{\theta}_t^{(m')})\|_{\infty}\\=&\begin{cases}
        \mathcal{O}(\sigma_0), &\text{if }m,m'\in\mathcal{M}_n,\\
        \Omega(\sigma_0^{0.5}),&\text{otherwise.}
    \end{cases}\notag
\end{align}
Following (\ref{hm-hm'}), the router consistently assigns tasks within the same ground-truth cluster $\mathcal{W}_n$ to any expert $m\in\mathcal{M}_n$. 
\end{proposition}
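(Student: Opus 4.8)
\proofsketch The plan is to track how the gradient updates in (\ref{update_theta}) diversify the gating parameters along the feature-signal directions, and then convert the resulting separation into the softmax gap claimed in (\ref{hm-hm'}). First I would simplify the gating gradient. Since the training loss $\mathcal{L}_t^{tr}$ in (\ref{training_loss}) depends only on the expert model $\bm{w}$ and not on $\mathbf{\Theta}_t$, only the locality loss (\ref{locality_loss}) contributes to $\nabla_{\bm{\theta}_t^{(m)}}\mathcal{L}_t$. Moreover, by \Cref{lemma:update_w} the increment $\bm{w}_{t+d_t}^{(m)}-\bm{w}_{t+d_t-1}^{(m)}$ vanishes for every idle unselected expert, so the locality loss collapses to $\pi_{m_t}(\mathbf{X}_t,\mathbf{\Theta}_t)\,E_t$ with model error $E_t=\|\bm{w}_{t+d_t}^{(m_t)}-\bm{w}_{t+d_t-1}^{(m_t)}\|_2$. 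Using the softmax derivative together with $\nabla_{\bm{\theta}^{(m)}}h_m=\sum_{i\in[s]}\mathbf{X}_{t,i}$ from (\ref{h_X_theta}), the update (\ref{update_theta}) becomes a rank-one step of the form $\bm{\theta}^{(m)}_{t+d_t+1}=\bm{\theta}^{(m)}_{t+d_t}-\eta\,E_t\,\pi_{m_t}(\mathbbm{1}[m=m_t]-\pi_m)\sum_{i\in[s]}\mathbf{X}_{t,i}$.

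Next I would isolate the signal direction. By \Cref{def:feature_structure}, $\sum_{i\in[s]}\mathbf{X}_{t,i}=\beta_t\bm{v}_t+\bm{\xi}_t$, where $\bm{\xi}_t$ is a sum of $s-1$ zero-mean Gaussian samples; a concentration bound shows $\|\bm{\xi}_t\|_\infty=\mathcal{O}(\sigma_0)$ with probability $1-o(1)$, so the update moves $\bm{\theta}^{(m)}$ essentially along $\pm\bm{v}_t$. Defining the signal projections $\rho_{m,n}=(\bm{\theta}_t^{(m)})^\top\bm{v}_n$, I would then write a per-update recursion for $\rho_{m,n}$ and argue, via the sign of $E_t\,\pi_{m_t}(\mathbbm{1}[m=m_t]-\pi_m)$, that routing a cluster-$n$ task to a well-fitting expert reinforces its own projection onto $\bm{v}_n$ while the softmax coupling simultaneously suppresses the projections of competing experts. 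Here the model error $E_t$ supplied by \Cref{lemma:update_w} is the crucial coupling: an expert that has already absorbed cluster $n$ produces a small $E_t$ and is gently locked in, whereas a mismatched expert produces a large $E_t$ whose gradient pushes its $\pi$-mass down for that task type, which in turn drives the partition into the sets $\mathcal{M}_n$ in (\ref{M_n}).

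To cover all experts I would run a coupon-collector argument on the exploratory routing (\ref{m_t}): the noise $r_t^{(m)}=o(1)$ guarantees that every idle expert is selected for every cluster type within $\Theta(M\ln(\tfrac{M}{\delta}))$ rounds with probability $1-o(1)$, while \Cref{lemma:M_th} guarantees that an idle correct-type expert is always available so that the reinforcement above can actually occur. Accumulating per-update increments of order $\eta\sigma_0$, the gap $\rho_{m,n}-\rho_{m,n'}$ for $m\in\mathcal{M}_n$ and $n'\neq n$ reaches $\Omega(\sigma_0^{0.5})$ after $\Theta(\eta^{-1}\sigma_0^{-0.5}M\ln(\tfrac{M}{\delta}))$ effective updates; adding the one outstanding delay $d_u$ yields the threshold $T_1$. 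Finally I would push the separated projections through the softmax (\ref{softmax}): for $m,m'\in\mathcal{M}_n$, \Cref{lemma:h-h_hat} keeps the gating outputs within $\mathcal{O}(\sigma_0)$ and hence the $\pi$-values within $\mathcal{O}(\sigma_0)$, whereas for experts in different sets the $\Omega(\sigma_0^{0.5})$ output gap is preserved up to constants by the local Lipschitz/monotonicity of softmax, giving the two cases of (\ref{hm-hm'}); the max-selection rule (\ref{m_t}) then routes every cluster-$n$ task to some $m\in\mathcal{M}_n$.

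The main obstacle is the asynchronicity forced by the MEC delays: the updates (\ref{update_theta}) land at the staggered times $t+d_t$ rather than synchronously, so the recursion for $\rho_{m,n}$ must be controlled along an interleaved, expert-dependent update schedule rather than a single global clock. Reconciling this delayed accumulation with both the coupon-collector coverage and the availability guarantee of \Cref{lemma:M_th}, while keeping the two-sided concentration of $\bm{\xi}_t$ uniform over all $\Theta(M\ln(\tfrac{M}{\delta}))$ exploratory rounds, is the technical crux; getting the correct-expert projection to grow monotonically despite occasional misrouting of a cluster-$n$ task to a busy-forced substitute is where I expect the argument to be most delicate.
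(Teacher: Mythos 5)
Your proposal follows the paper's route in its essentials: the gradient of the task loss reduces to the locality-loss term acting through the softmax derivative as a rank-one step along $\sum_{i\in[s]}\mathbf{X}_{t,i}$, whose Gaussian part is concentrated away so updates move $\bm{\theta}^{(m)}$ along the signal direction (the paper's \Cref{lemma:gradient} plus the Hoeffding step used for \Cref{lemma:h-h_hat}); a coupon-collector bound yields the $M\ln(\frac{M}{\delta})$ factor in $T_1$ with $d_u$ added to absorb the delays; and the $h$-to-$\pi$ conversion (\Cref{lemma:pi-hat_pi,lemma:pi_consistence}) produces the two regimes in (\ref{hm-hm'}). Two substantive points differ. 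First, your sign for the selected expert is backwards: by \Cref{lemma:gradient} the locality-loss gradient \emph{decreases} the selected expert's projection $\langle\bm{\theta}^{(m_t)},\bm{v}_t\rangle$ by $\mathcal{O}(\sigma_0)$ (scaled by the model increment) and \emph{increases} every other expert's by $\mathcal{O}(M^{-1}\sigma_0)$, since the gradients sum to zero over experts; there is no monotone reinforcement of a well-fitting expert's own projection --- instead the next task of the same type is pushed toward a different expert, which is precisely what drives the fairness/exploration phase. Your follow-up sentence (small $E_t$ when matched, large $E_t$ when mismatched) does capture the true mechanism, but the accumulation must then be an argument about \emph{relative rates of decrease}, not monotone growth, so the recursion for $\rho_{m,n}$ would need to be recast. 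Second, the paper never runs such a recursion: it proves a fairness lemma $|f_{T_1}^{(m)}-\frac{1}{M}|=\mathcal{O}(\eta^{0.5}M^{-1})$ (\Cref{lemma:fairness}), deduces $\|\bm{\theta}_{T_1}^{(m)}-\bm{\theta}_{T_1}^{(m')}\|_{\infty}=\mathcal{O}(\eta^{-0.5}\sigma_0)$ (\Cref{lemma:exploration_phase}), and then establishes the partition into the sets $\mathcal{M}_n$ by contradiction: a misassigned pair of experts would force a softmax gap that is simultaneously $\mathcal{O}(\sigma_0\eta^{-0.5})$ and $\Omega(\sigma_0^{0.5})$, impossible when $\eta=\mathcal{O}(\sigma_0^{0.5})$. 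That contradiction step is exactly what lets the paper sidestep the monotonicity-under-misrouting difficulty you correctly identify as the crux, so if you keep your direct-recursion version, expect that step --- not the asynchronous bookkeeping --- to be where the argument needs the most work.
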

The proof of \Cref{prop:exploration} is given in \Cref{proof_prop:exploration}.
By observing (\ref{hm-hm'}), for any two experts $m$ and $m'$ within the same expert set $\mathcal{M}_n$, the gap between their gating network outputs is small, satisfying $\mathcal{O}(\sigma_0)$. 
Conversely, for any two experts in different expert sets, their gating output gap is diversified to be larger, i.e., $\Theta(\sigma_0)$.

According to the update rule of gating network parameters in (\ref{update_theta}), after completing a task at time $t$, the MoE will update $\bm{\theta}_t^{(m)}$ for any expert $m\in\mathbb{M}$ simultaneously.
Therefore, we only need to ensure that the MoE model has completed a sufficient number of tasks (i.e., $\lceil \eta^{-1}\sigma_0^{-0.5}M\ln(\frac{M}{\delta})\rceil$) for router exploration to diversity $\bm{\theta}_t^{(m)}$ of experts within different expert sets.
Given the diversified gating network parameter of each expert in (\ref{hm-hm'}), the router will only select expert $m$ in set $\mathcal{M}_n$ for new tasks within cluster $\mathcal{W}_n$ by our adaptive routing strategy in (\ref{m_t}).

Based on \Cref{prop:exploration}, our \Cref{algo:update_MoE} guarantees that each expert $m$ will continue to correctly update its local model $\bm{w}_t^{(m)}$ with the same type of tasks correctly assigned by the router. 
Then, we analyze the learning convergence of each expert's local model under \Cref{algo:update_MoE} in the next proposition. 

\begin{proposition}[Experts' Learning Convergence]\label{prop:expert_learning}
Under \Cref{algo:update_MoE}, for any task arrival $t>T_1$, each expert $m\in\mathbb{M}$ satisfies learning convergence
\begin{align}
    \|\bm{w}_{t}^{(m)}-\bm{w}_{T_2}^{(m)}\|_{\infty}=\mathcal{O}(\sigma_0^{2}) \label{expert_stabilization}
\end{align}
with probability at least $1-o(1)$.
\end{proposition}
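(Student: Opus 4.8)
The plan is to turn the min-norm update of \Cref{lemma:update_w} into an affine-projection recursion and then show that, once the router has converged, this recursion drives each expert's model into an $\mathcal{O}(\sigma_0^2)$ ball around a fixed cluster reference and keeps it there. First, by \Cref{prop:exploration}, for every $t>T_1$ the router forwards to expert $m\in\mathcal{M}_n$ only tasks whose ground truth lies in $\mathcal{W}_n$. Substituting $\mathbf{y}_t=\mathbf{X}_t^\top\bm{w}_t$ from \Cref{def:feature_structure} into (\ref{update_wt}) and writing $P_t:=\mathbf{X}_t(\mathbf{X}_t^\top\mathbf{X}_t)^{-1}\mathbf{X}_t^\top$ for the orthogonal projector onto the column space of $\mathbf{X}_t$, the update of the selected expert becomes
\begin{align}
    \bm{w}_{t+d_t}^{(m_t)}=(\mathbf{I}-P_t)\bm{w}_{t+d_t-1}^{(m_t)}+P_t\bm{w}_t. \notag
\end{align}
Fixing a cluster reference $\bar{\bm{w}}^{(n)}$ (say the ground truth of the first task routed to $m$ after $T_1$) and letting $\bm{e}_t^{(m)}:=\bm{w}_t^{(m)}-\bar{\bm{w}}^{(n)}$, \Cref{assumption_1} gives $\|\bm{w}_t-\bar{\bm{w}}^{(n)}\|_\infty=\mathcal{O}(\sigma_0^2)$, so the residual obeys $\bm{e}^{(m)}_{\text{after}}=(\mathbf{I}-P_t)\bm{e}^{(m)}_{\text{before}}+P_t\bm{\delta}_t$ with $\|\bm{\delta}_t\|_\infty=\mathcal{O}(\sigma_0^2)$. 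By (\ref{noupdate_wt}) the unselected experts do not move, so it suffices to control this recursion for the selected one.

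The second step is a two-phase analysis of the recursion. In the \emph{alignment phase} I would show that the product of projectors $\prod_k(\mathbf{I}-P_{t_k})$ contracts the residual geometrically toward an $\mathcal{O}(\sigma_0^2)$ floor: by \Cref{def:feature_structure} each $\mathbf{X}_t$ contains the cluster-consistent signal column $\beta_t\bm{v}_t$ together with $s-1$ zero-mean Gaussian columns, so each update annihilates the residual along the signal direction and, in a randomized-Kaczmarz fashion, shrinks the orthogonal part by a factor governed on average by $s/p$. Since the $s$-dimensional random subspaces cover $\mathbb{R}^p$ after $\mathcal{O}(\tfrac{p}{s})$ arrivals, this identifies a stabilization round $T_2$, with $T_2-T_1=\mathcal{O}(\tfrac{p}{s}\ln\tfrac{1}{\sigma_0})$ cluster-$n$ arrivals, after which $\|\bm{e}_{T_2}^{(m)}\|_\infty=\mathcal{O}(\sigma_0^2)$. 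In the subsequent \emph{stability phase} $t>T_2$, the driving term $P_t\bm{\delta}_t$ is $\mathcal{O}(\sigma_0^2)$ and $\mathbf{I}-P_t$ is an $\ell_2$-nonexpansion, so the residual cannot leave the $\mathcal{O}(\sigma_0^2)$ ball; combining the two snapshots by the triangle inequality yields $\|\bm{w}_t^{(m)}-\bm{w}_{T_2}^{(m)}\|_\infty\le\|\bm{e}_t^{(m)}\|_\infty+\|\bm{e}_{T_2}^{(m)}\|_\infty=\mathcal{O}(\sigma_0^2)$, which is exactly (\ref{expert_stabilization}).

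The main obstacle is that (\ref{expert_stabilization}) is stated in the $\ell_\infty$ norm, whereas orthogonal projections are only well behaved in $\ell_2$: a single factor $\mathbf{I}-P_t$ contracts $\ell_2$ but can inflate $\ell_\infty$ by a dimension factor, so neither the contraction nor the stability bound transfers to $\ell_\infty$ for free. To close this gap I would reuse the Gaussian-concentration machinery behind \Cref{lemma:h-h_hat}: with probability $1-o(1)$ the noise columns are near-orthogonal and well-conditioned, which bounds the coherence and the entrywise magnitude of $P_t$ and thereby prevents $\ell_\infty$ blow-up across rounds and confines the accumulated $\bm{\delta}_t$ perturbations to the $\mathcal{O}(\sigma_0^2)$ floor. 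The companion difficulty is that the contraction must hold for a \emph{product} of data-dependent random projections rather than one; both are handled by a union bound over the $\mathcal{O}(T)$ rounds, which also supplies the claimed overall success probability $1-o(1)$.
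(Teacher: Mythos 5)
Your setup (the projection recursion obtained by substituting $\mathbf{y}_t=\mathbf{X}_t^\top\bm{w}_t$ into \cref{update_wt}, and the restriction to cluster-consistent arrivals via \Cref{prop:exploration}) matches the paper's, but your core mechanism is genuinely different. The paper proves no contraction rate at all: it takes as reference the minimum-$\ell_2$-norm \emph{offline} solution $\bm{w}^{(m)}$ for expert $m$'s cluster, so that writing $\mathbf{y}_t=\mathbf{X}_t^\top\bm{w}^{(m)}$ makes the residual recursion exactly homogeneous, $\bm{w}_t^{(m)}-\bm{w}^{(m)}=(\bm{I}-\bm{P}_t)\cdots(\bm{I}-\bm{P}_{T_1+1})(\bm{w}_{T_1}^{(m)}-\bm{w}^{(m)})$; it then invokes only non-expansiveness of the projector complements, asserts that both $\bm{w}_{T_1+1}^{(m)}$ and $\bm{w}_t^{(m)}$ lie within the intra-cluster diameter $\mathcal{O}(\sigma_0^2)$ of $\bm{w}^{(m)}$, and finishes by the triangle inequality with $T_2$ effectively equal to $T_1+1$. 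You instead center the recursion at a single cluster ground truth, which forces an inhomogeneous term $P_t\bm{\delta}_t$ with $\|\bm{\delta}_t\|_\infty=\mathcal{O}(\sigma_0^2)$, and you compensate with a quantitative randomized-Kaczmarz contraction over $\mathcal{O}(\tfrac{p}{s}\ln\tfrac{1}{\sigma_0})$ rounds followed by a stability phase. Your route is more explicit about the transient that the paper glosses over, at the price of needing two estimates the paper never has to prove.

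Two of those estimates do not close as stated. First, the stability phase: each step contracts the $\ell_2$ residual only by an expected factor $1-\Theta(s/p)$ (the projection removes an $s$-dimensional slice of a $p$-dimensional residual) while injecting a perturbation of size $\mathcal{O}(\sigma_0^2)$. The fixed point of $x\mapsto(1-\Theta(s/p))x+\mathcal{O}(\sigma_0^2)$ is $\mathcal{O}(\tfrac{p}{s}\sigma_0^2)$, not $\mathcal{O}(\sigma_0^2)$; in the overparameterized regime $s<p$ that the paper emphasizes, $p/s$ need not be $\mathcal{O}(1)$, so ``the residual cannot leave the $\mathcal{O}(\sigma_0^2)$ ball'' does not follow from non-expansiveness plus a bounded driving term. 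This accumulation problem is exactly what the paper's choice of reference is designed to avoid: with the interpolating offline solution as center, $\bm{\delta}_t\equiv\bm{0}$ and nothing accumulates. Second, the range of $t$: the proposition asserts the bound for every task arrival $t>T_1$, whereas your alignment phase concedes that the model moves by $\Theta(1)$ between $T_1$ and your later $T_2$; your argument therefore yields \cref{expert_stabilization} only for $t\geq T_2$, a strictly weaker conclusion than the paper's. The $\ell_\infty$-versus-$\ell_2$ issue you raise is real, but you only sketch the coherence bound that would resolve it, and the paper is equally informal on that point, so I would not count it against you; the two items above are the substantive gaps.
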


The proof of \Cref{prop:expert_learning} is given in \Cref{proof_prop:expert_learning}.
Based on the update rule of the expert model in (\ref{update_wt}), as long as the router consistently assigns tasks from the same cluster to expert $m$, its model $\bm{w}_t^{(m)}$ will only undergo slight updates for each task, with the minimum gap $\mathcal{O}(\sigma_0^2)$ in (\ref{w_norm}). 
Therefore, after completing a sufficient number of tasks (i.e., $\lceil \eta^{-1}\sigma_0^{-0.5}M\ln(\frac{M}{\delta})\rceil$) for updating each expert $m\in\mathbb{M}$, each expert will complete its learning stage and stabilize within one out of the $N$ expert sets in (\ref{M_n}).

\section{Theoretical Performance Guarantee on Overall Generalization Error}\label{section5}

Based on the convergence analysis in \Cref{section4}, in this section, we derive the overall generalization error of \Cref{algo:update_MoE} to further analyze the learning performance of our MoE model. 
Note that overall generalization error encapsulates the MoE model's ability to learn new tasks while retaining knowledge of previous ones in CL, where a large error tells severe overfitting or catastrophic forgetting of previous tasks (\!\!\cite{chaudhry2018efficient,doan2021theoretical,lin2023theory}).
To demonstrate the performance gain of our MoE solution, we also derive the generalization error of the existing MEC offloading strategy as a benchmark.

For the MoE model described in \Cref{section2}, we follow the existing literature on continual learning (e.g., \cite{chaudhry2018efficient,doan2021theoretical,lin2023theory}) to define $\mathcal{E}_t(\bm{w}_{t+d_t}^{(m_t)})$ as the model error for the $t$-th task:
\begin{align}
  \mathcal{E}_t(\bm{w}_{t+d_t}^{(m_t)})=\|\bm{w}_{t+d_t}^{(m_t)}-\bm{w}_{t}\|^2_2,\label{def:model_error}
\end{align}
which characterizes the generalization performance of the selected expert $m_t$ with model $\bm{w}_{t+d_t}^{(m_t)}$ for task $t$ at round $t$.

As in the continual learning literature (e.g., \cite{chaudhry2018efficient,doan2021theoretical,lin2023theory}), we define the overall generalization performance of the model $\bm{w}_{T+d_T}^{(m)}$ after training the last task $T$ at time $T+d_T$, by computing the average model error in (\ref{def:model_error}) across all tasks:
\begin{align}    
    G_T=\frac{1}{T}\sum_{t=1}^T\mathcal{E}_{d}(\bm{w}_{T+d_T}^{(m_{t})}).\label{def:generalization_error}
\end{align}

In the following proposition, we derive the overall generalization error, defined in (\ref{def:generalization_error}), for the existing offloading strategies in MEC networks, which typically transfer tasks to the nearest or most powerful available experts to enhance communication and computation efficiency (e.g., \cite{ouyang2018follow,shakarami2020survey,gao2019winning,yan2021pricing}). 
Here we define $r:=1-\frac{s}{p}<1$ as the overparameterized ratio, where $s$ is the number of samples and $p$ is the dimension of each sampled vector in $\mathbf{X}_t$.
We define the number of updates of expert $m$ till completing task $t$ as:
\begin{align}
    L_t^{(m)}=\sum_{\tau=1}^t \mathds{1}\{m_{\tau}=m\},\label{L_t(m)}
\end{align}
where $m_{\tau}$ is the selected expert at time $\tau\in\{1,\cdots, t\}$, and $\mathds{1}\{(\cdot)\}=1$ if $(\cdot)$ is true and $\mathds{1}\{(\cdot)\}=0$ otherwise.
For expert $m$, let $\tau^{(m)}(i)\in\{1,\cdots, T\}$ represent the time slot when the router selects expert $m$ for the $i$-th time. 
\begin{proposition}\label{prop:GT_benchmark}
If the MEC network operator always chooses the nearest or the most powerful expert for each task arrival $t\in\mathbb{T}$ as in the existing MEC offloading literature (e.g., \cite{ouyang2018follow,shakarami2020survey,gao2019winning,yan2021pricing}), the overall generalization error is:
\begin{align}
    \mathbb{E}[G_T]\label{error_benchmark}
    =&\underbrace{\frac{1}{T}\sum_{t=1}^T r^{L_{T}^{(m_t)}}\|\bm{w}_t\|^2}_{\text{term G}^1}\\&+\underbrace{\frac{1}{T}\sum_{t=1}^T(1-r^{L_T^{(m_t)}})\mathbb{E}\Big[\|\bm{w}_{n}-\bm{w}_{n'}\|^2\Big|n,n'\in[N]\Big]}_{\text{term G}^2},\notag
\end{align}
which approaches to the maximum $\mathbb{E}[\|\bm{w}_{n}-\bm{w}_{n'}\|^2|n,n'\in[N]]$ as time horizon $T\rightarrow\infty$.
\end{proposition}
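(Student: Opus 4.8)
The plan is to unroll the expert update rule (\ref{update_wt}) into a projection recursion and then track the squared model error in expectation. First I would substitute $\mathbf{y}_t = \mathbf{X}_t^\top \bm{w}_t$ into (\ref{update_wt}) and define the projector $\mathbf{P}_t := \mathbf{X}_t(\mathbf{X}_t^\top\mathbf{X}_t)^{-1}\mathbf{X}_t^\top$ onto the column space of $\mathbf{X}_t$, so that the post-update error of the selected expert satisfies $\bm{w}_{t+d_t}^{(m_t)} - \bm{w}_t = (\mathbf{I}-\mathbf{P}_t)(\bm{w}_{t+d_t-1}^{(m_t)} - \bm{w}_t)$. Fixing a target task $t$ with $\bm{w}_t \in \mathcal{W}_n$ and letting $u_1 < \cdots < u_K$ (with $K = L_T^{(m_t)}$ from (\ref{L_t(m)})) denote the times at which expert $m_t$ is updated, I would track the running error $\bm{\delta}_i := \bm{w}_{u_i + d_{u_i}}^{(m_t)} - \bm{w}_t$. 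Since expert $m_t$ is frozen between its own updates by (\ref{noupdate_wt}), this yields the clean one-step recursion $\bm{\delta}_i = (\mathbf{I}-\mathbf{P}_{u_i})\bm{\delta}_{i-1} + \mathbf{P}_{u_i}(\bm{w}_{u_i}-\bm{w}_t)$, with base case $\bm{\delta}_0 = -\bm{w}_t$ from the zero initialization $\bm{w}_0^{(m_t)} = \bm{0}$.

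The second step is to take expectations. The crucial observation is that $(\mathbf{I}-\mathbf{P}_{u_i})\bm{\delta}_{i-1}$ lies in the orthogonal complement of the column space while $\mathbf{P}_{u_i}(\bm{w}_{u_i}-\bm{w}_t)$ lies in it, so the inner product vanishes deterministically and $\|\bm{\delta}_i\|^2$ splits into a sum of two squared norms. Invoking the spectral identity $\mathbb{E}[\|(\mathbf{I}-\mathbf{P}_{u_i})\bm{v}\|^2] = r\|\bm{v}\|^2$ for a vector $\bm{v}$ independent of $\mathbf{X}_{u_i}$ — which follows from $\mathbf{P}_{u_i}$ projecting onto an (essentially) random $s$-dimensional subspace of $\mathbb{R}^p$ with $r = 1 - s/p$, per the generation in Definition~\ref{def:feature_structure} — collapses the recursion to the scalar form $\mathbb{E}[\|\bm{\delta}_i\|^2] = r\,\mathbb{E}[\|\bm{\delta}_{i-1}\|^2] + (1-r)\,\mathbb{E}[\|\bm{w}_{u_i}-\bm{w}_t\|^2]$. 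Unrolling over $i=1,\dots,K$ gives $\mathbb{E}[\mathcal{E}_d(\bm{w}_{T+d_T}^{(m_t)})] = r^{L_T^{(m_t)}}\|\bm{w}_t\|^2 + (1-r)\sum_{i} r^{L_T^{(m_t)}-i}\,\mathbb{E}[\|\bm{w}_{u_i}-\bm{w}_t\|^2]$, whose leading term is exactly $\text{term G}^1$.

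The third step reduces the residual sum to $\text{term G}^2$. Because the benchmark always routes to the nearest or most-powerful available expert and is blind to the task type, the clusters of the tasks $u_i$ assigned to $m_t$ form an unbiased mix over $[N]$; hence $\mathbb{E}[\|\bm{w}_{u_i}-\bm{w}_t\|^2]$ concentrates to the cluster-averaged gap $\mathbb{E}[\|\bm{w}_n-\bm{w}_{n'}\|^2 \mid n,n'\in[N]]$ (the index $i$ with $u_i = t$ contributes $0$, a lower-order correction). Applying the geometric identity $(1-r)\sum_{i=1}^{K} r^{K-i} = 1 - r^K$ factors out the average gap and produces $\text{term G}^2$; averaging over $t\in[T]$ through (\ref{def:generalization_error}) then gives (\ref{error_benchmark}). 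Finally, since the benchmark keeps every expert repeatedly occupied so that $L_T^{(m_t)}\to\infty$ as $T\to\infty$ while $r<1$, the factor $r^{L_T^{(m_t)}}\to 0$ drives $\text{term G}^1\to 0$ and $\text{term G}^2 \to \mathbb{E}[\|\bm{w}_n-\bm{w}_{n'}\|^2 \mid n,n'\in[N]]$, establishing convergence to the maximum.

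I expect the main obstacle to be the concentration and averaging argument of the third step: justifying that the projector $\mathbf{P}_{u_i}$ may be treated as independent of the ground-truth differences (despite $\mathbf{X}_{u_i}$ carrying the feature signal $\bm{v}_{u_i}$ correlated with $\bm{w}_{u_i}$) and that the cluster labels of the routed tasks genuinely average to the uniform cross-cluster gap under the type-blind benchmark. Making the identity $\mathbb{E}[\|(\mathbf{I}-\mathbf{P}_{u_i})\bm{v}\|^2] = r\|\bm{v}\|^2$ precise for the mixed fixed-signal-plus-Gaussian design of Definition~\ref{def:feature_structure}, rather than an exactly uniform random subspace, is the other delicate point; I would handle it by decomposing $\bm{v}$ along $\bm{v}_{u_i}$ and its orthogonal complement and absorbing the resulting $\mathcal{O}(1/p)$ discrepancy into the overparameterized regime.
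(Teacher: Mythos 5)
Your proposal is correct and takes essentially the same route as the paper: the paper's auxiliary lemma (Lemma~\ref{lemma:error_cases}) derives exactly your projection recursion $\bm{\delta}_i=(\mathbf{I}-\mathbf{P}_{u_i})\bm{\delta}_{i-1}+\mathbf{P}_{u_i}(\bm{w}_{u_i}-\bm{w}_t)$ and collapses it to the same scalar expectation recursion via rotational symmetry of the Gaussian samples, yielding $r^{L_T^{(m_t)}}\|\bm{w}_t\|^2$ plus the weighted geometric sum. The paper's proof of the proposition then, just as you propose, replaces each per-step gap by the uniform cross-cluster average under type-blind routing and applies the identity $(1-r)\sum_{l}r^{L-l}=1-r^{L}$ to obtain term $\text{G}^2$.
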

The proof of \Cref{prop:GT_benchmark} is given in \Cref{proof_prop:GT_benchmart}.
The generalization error arises from two terms in (\ref{error_benchmark}):
\begin{itemize}
    \item Term $\text{G}^1$: training error of the ground truth of each task under overparameterized regime, which does not scale up with time horizon $T$ given overparameterized ratio $r<1$.
    \item Term $\text{G}^2$: the model gap between ground truths assigned the same expert, which increases with time horizon $T$. As $T\rightarrow\infty$, $\text{G}^2$ approaches to the maximum $\mathbb{E}[\|\bm{w}_{n}-\bm{w}_{n'}\|^2|n,n'\in[N]]$.
\end{itemize}
Under the existing offloading strategies (e.g., \cite{ouyang2018follow,shakarami2020survey,gao2019winning,yan2021pricing}), the MEC network operator prioritizes communication and computation efficiency, resulting in the random assignment of task types to each expert regardless of its assigned task types in the past. 
Consequently, none of the expert models can fully converge, and they continue to update until the final task $T$, as shown in both terms $\text{G}^1$ and $\text{G}^2$ in (\ref{error_benchmark}). 
Once distinct tasks exhibit large model gap $\text{G}^2$, the learning performance of $\mathbb{E}[G_T]$ deteriorates. As time horizon $T$ approaches infinity, $\mathbb{E}[G_T]$ finally approaches to the maximum $\mathbb{E}[\|\bm{w}_{n}-\bm{w}_{n'}\|^2|n,n'\in[N]]$ in $\text{G}^2$, which is totally determined by the expected model gap of two distinct tasks.
Thus, \Cref{prop:GT_benchmark} tells that the existing offloading strategies lead to poor learning performance to handle continual task learning with diverse data distributions.

\begin{figure*}[t]
    \centering
    \begin{eqnarray}\label{error_algorithm}
        \mathbb{E}[G_T]
    <& \frac{1}{T}\sum_{t=1}^{T}r^{L_{T}^{(m_t)}}\|\bm{w}_{t}\|^2+\underbrace{\frac{1}{T}\sum_{t=1}^{T}(1-r^{L_{T_1}^{(m_t)}})\cdot r^{L_T^{(m_t)}-L_{T_1}^{(m_t)}}\mathbb{E}\Big[\|\bm{w}_{n}-\bm{w}_{n'}\|^2\Big|n,n'\in[N]\Big]}_{\text{term G}^3}+\underbrace{\mathcal{O}(\sigma_0^{2})}_{\text{term G}^4}
    \end{eqnarray}
    \vspace*{8pt}
    \hrulefill
    \vspace*{8pt}
\end{figure*}

Next, we derive the explicit upper bounds for the overall generalization error of our \Cref{algo:update_MoE} in the following theorem, based on \Cref{lemma:M_th} on number of experts, \Cref{prop:exploration} on our MoE router convergence and \Cref{prop:expert_learning} on our MoE expert convergence.
\begin{theorem}\label{thm:error_algorithm}
Given $M=\Omega(NM_{th}\ln(\frac{1}{\delta}))$ with $M_{th}$ defined in (\ref{M_th}), after \Cref{algo:update_MoE}'s completion of training the last task $T$ at time $T+d_T$, the overall generalization error satisfies (\ref{error_algorithm}), which converges to the minimum model error $\mathcal{O}(\sigma_0^{2})$ between tasks in the same cluster, as time horizon $T\rightarrow\infty$.
\end{theorem}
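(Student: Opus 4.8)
The plan is to bound the average model error (\ref{def:generalization_error}) by unrolling the overparameterized update (\ref{update_wt}) and then splitting the horizon at the router-convergence time $T_1$ from \Cref{prop:exploration}. Writing $\bm{P}_t=\bm{I}-\mathbf{X}_t(\mathbf{X}_t^\top\mathbf{X}_t)^{-1}\mathbf{X}_t^\top$ for the projector onto the null space of $\mathbf{X}_t^\top$, every update obeys $\bm{w}_{t+d_t}^{(m_t)}-\bm{w}_t=\bm{P}_t(\bm{w}_{t+d_t-1}^{(m_t)}-\bm{w}_t)$, so the final model of any expert is a telescoping composition of such projectors applied to its assigned ground truths. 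The one preliminary fact I would establish is that, under \Cref{def:feature_structure}, $\mathbb{E}[\bm{P}_t]$ contracts any residual direction transverse to the current task's row space by the overparameterized ratio $r=1-s/p$; this is the same mechanism that produces the factors $r^{L_T^{(m_t)}}$ in the benchmark \Cref{prop:GT_benchmark}, and I would reuse that computation here.

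For the post-convergence phase I would invoke \Cref{prop:exploration} and \Cref{prop:expert_learning}: for every task $t>T_1$, the router sends it to a stabilized expert $m_t\in\mathcal{M}_n$ with $\bm{w}_t\in\mathcal{W}_n$, and every later task routed to that expert also lies in $\mathcal{W}_n$. Decomposing $\|\bm{w}_{T+d_T}^{(m_t)}-\bm{w}_t\|^2$ for such a task then yields three pieces: the never-learned component of $\bm{w}_t$ in the unexplored subspace, which equals $r^{L_T^{(m_t)}}\|\bm{w}_t\|^2$ (the first term of (\ref{error_algorithm})); the cross-cluster contamination deposited in the expert's model during its $L_{T_1}^{(m_t)}$ exploration-phase updates; and the within-cluster drift among same-cluster ground truths, which Assumption~\ref{assumption_1} together with the stabilization bound (\ref{expert_stabilization}) controls by $\mathcal{O}(\sigma_0^2)$ (term G$^4$).

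The crux is tracking the second piece. The contamination left after exploration has magnitude of order $(1-r^{L_{T_1}^{(m_t)}})\,\mathbb{E}[\|\bm{w}_n-\bm{w}_{n'}\|^2]$, exactly as in \Cref{prop:GT_benchmark} but with the update count capped at $T_1$. I would then show that each of the $L_T^{(m_t)}-L_{T_1}^{(m_t)}$ subsequent same-cluster projections contracts this contaminated component by $r$ in expectation, because it lies transverse to the stabilized cluster's row space; iterating gives the damping factor $r^{L_T^{(m_t)}-L_{T_1}^{(m_t)}}$ and hence term G$^3$. Proving that this contaminated component genuinely stays inside the subspace the within-cluster updates keep shrinking---rather than being partially re-injected through the signal direction $\bm{v}_n$ of the incoming tasks---is the \emph{main obstacle}, and is where the separation $\Theta(\sigma_0)$ versus $\mathcal{O}(\sigma_0^2)$ of Assumption~\ref{assumption_1} is essential. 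The residual $t\le T_1$ tasks that were themselves misrouted each contribute an un-damped cross-cluster gap, but their number is at most $T_1$, so their total contribution to (\ref{def:generalization_error}) is $\mathcal{O}(T_1/T)$.

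Finally I would send $T\to\infty$. Since tasks keep arriving while the expert pool $M$ is finite, every specialized expert is chosen infinitely often, so both $L_T^{(m_t)}$ and $L_T^{(m_t)}-L_{T_1}^{(m_t)}$ diverge; with $r<1$ the first term, term G$^3$, and the $\mathcal{O}(T_1/T)$ correction all vanish, leaving only term G$^4=\mathcal{O}(\sigma_0^2)$, the minimum within-cluster model error. This establishes both the bound (\ref{error_algorithm}) and its claimed limit.
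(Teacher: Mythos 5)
Your proposal follows essentially the same route as the paper: both split the horizon at the router-convergence time $T_1$, unroll the minimum-norm update (\ref{update_wt}) via the orthogonal projectors $\mathbf{X}_t(\mathbf{X}_t^\top\mathbf{X}_t)^{-1}\mathbf{X}_t^\top$ whose expected contraction yields the ratio $r=1-s/p$ (the paper's \Cref{lemma:error_cases}), and arrive at the same three-term decomposition of (\ref{error_algorithm}) with the exploration-phase contamination damped by $r^{L_T^{(m_t)}-L_{T_1}^{(m_t)}}$ and the post-convergence within-cluster drift controlled by \Cref{prop:expert_learning} as $\mathcal{O}(\sigma_0^2)$. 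The step you flag as the main obstacle (that the contaminated component keeps contracting under post-$T_1$ same-cluster projections) is exactly the step the paper handles, somewhat implicitly, by the rotational-symmetry expectation argument in \Cref{lemma:error_cases}, so your account is a faithful match.
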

The proof of \Cref{thm:error_algorithm} is given in \Cref{proof_thm:error_algorithm}.
Compared to generalization error resulting from existing MEC offloading strategies in (\ref{error_benchmark}), $\mathbb{E}[G_T]$ in (\ref{error_algorithm}) successfully decomposes the dominant error term $\text{G}^2$ into two components:
\begin{itemize}
    \item Term $\text{G}^3$: the model error arising from the randomized routing for the router exploration and the expert learning ($t\leq T_1$ in \Cref{prop:exploration} and \Cref{prop:expert_learning}) under \Cref{algo:update_MoE}. As time horizon $T\rightarrow\infty$, $\text{G}^3$ tends toward $0$ given $r<1$, as the long-term convergence mitigates the generalization error caused by wrong routing decisions made in the early stage. 
    \item Term $\text{G}^4$: the minimum model error between similar tasks within the same cluster that are routed to a specific expert $m$ after the expert stabilizes within an expert set, as described in \Cref{prop:expert_learning}.
\end{itemize} 

Therefore, the generalization error of our MoE \Cref{algo:update_MoE} is significantly reduced compared to (\ref{error_benchmark}), especially when $T$ is non-small.
Given a fixed time horizon $T$, increasing the number of experts $M$ decreases the number of updates $L_{T}^{(m)}$ in (\ref{L_t(m)}) for each expert $m\in\mathbb{M}$ up to $T$, while $L_{T_1}^{(m)}$ remains unchanged after the convergence time $T_1$ in \Cref{prop:exploration}. 
Consequently, the right-hand-side of (\ref{error_algorithm}) increases, since $r^{L_{T}^{(m)}}$ decreases with $L_{T}^{(m)}$. 
However, as $T\rightarrow \infty$, $\mathbb{E}[G_T]$ is still bounded by the minimum $\mathcal{O}(\sigma_0^2)$ in (\ref{error_algorithm}).

\begin{figure*}[htbp]
    \centering
    \captionsetup{font={footnotesize}}
    \subfigure[Our \Cref{algo:update_MoE}.]{\label{subfig:error_algo}\includegraphics[width=0.38\textwidth]{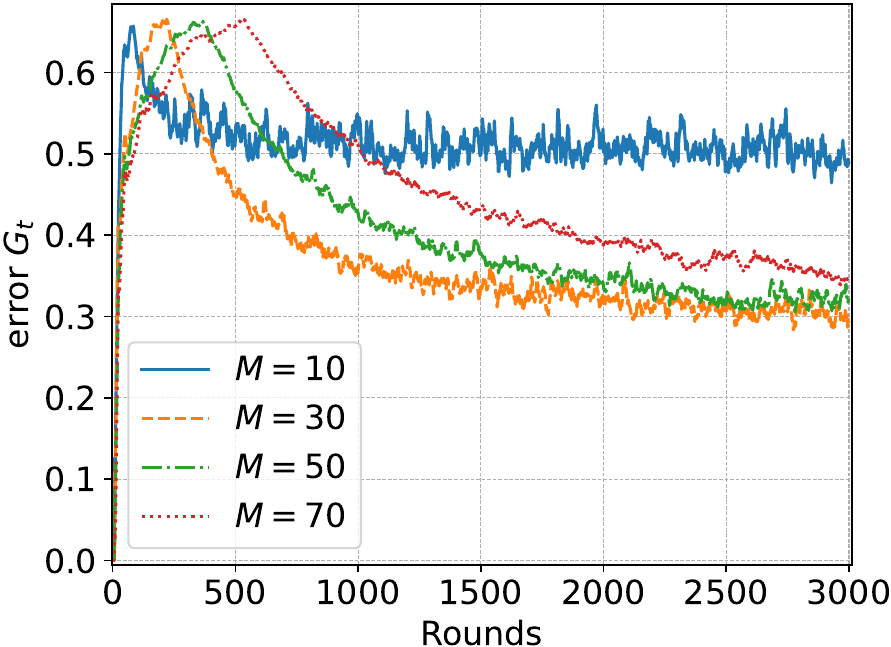}} 
    \hspace{0.2cm}
    \subfigure[Existing MEC offloading strategies.]{\label{subfig:error_benchmark}\includegraphics[width=0.38\textwidth]{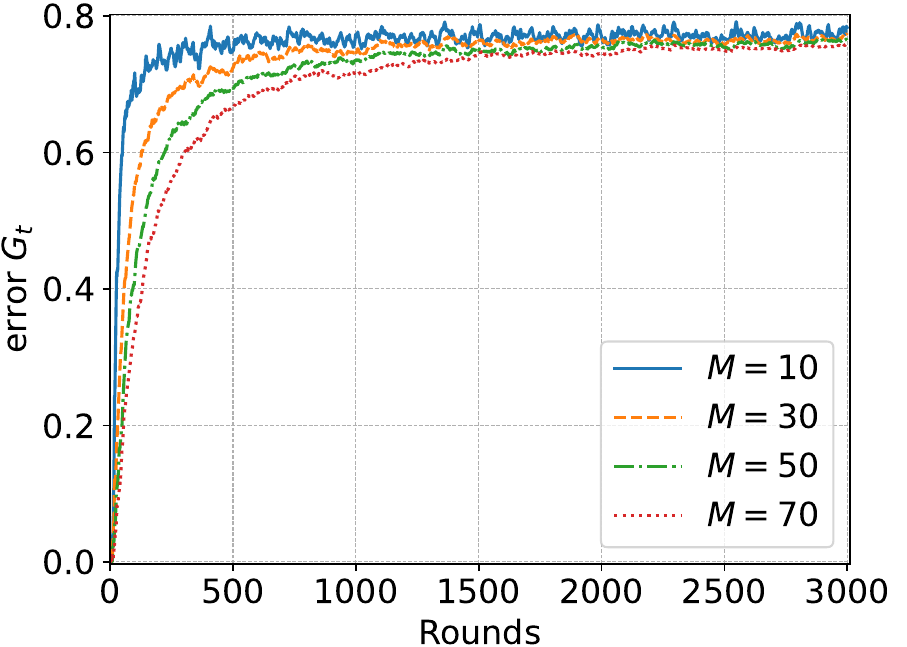}}
    \caption{The dynamics of overall generalization errors under our MoE \Cref{algo:update_MoE} and the MEC existing offloading strategies that always select the nearest or the most powerful available server (e.g., \cite{ouyang2018follow,shakarami2020survey,gao2019winning,yan2021pricing}).
    Here we set $T=3000$, $N=10$, $\sigma_0=0.6,d_u=10, \eta=0.2,p=15,s=10$, and vary $M\in\{10,30,50,70\}$.}
    \label{fig:error_linear}
\end{figure*}

\section{Real data Experiments on DNNs to Verify our MoE Theoretical Results}\label{section6}

In addition to theoretical analysis on learning convergence in \Cref{section4} and performance guarantee in \Cref{section5}, we further conduct extensive experiments to validate our theoretical results. 
Below we first run experiments using linear models in \Cref{section3} to confirm theoretical results. 
Subsequently, we extend to DNNs by employing real datasets, thereby demonstrating the broader applicability of our results.

In the first experiment, we generate synthetic datasets by \Cref{def:feature_structure} in linear models to compare the learning performance of our designed \Cref{algo:update_MoE} against the existing offloading strategies in MEC, which always select the nearest or the most powerful available expert (e.g., \cite{ouyang2018follow,shakarami2020survey,gao2019winning,yan2021pricing}).
We set the parameters as follows: $T=3000$, $N=10$ task types, $\sigma_0=0.6$, $d_u=10$, $\eta=0.2$, $p=15$, and $s=10$. We vary the number of experts $M\in\{10,30,50,70\}$ in Fig.~\ref{fig:error_linear} to compare the overall generalization errors over time of the three approaches.

As shown in Fig.~\ref{subfig:error_algo}, the overall generalization error under our proposed \Cref{algo:update_MoE} decreases with time rounds and becomes smaller than $\sigma_0^2=0.36$ for $M\in\{30,50,70\}$, consistent with (\ref{error_algorithm}) from \Cref{thm:error_algorithm}. 
However, for small server number $M=10$, the MoE approach cannot guarantee the availability of an idle expert with $\gamma_t^{(m)}=1$ to handle a new task correctly, resulting in incorrect routing and increased generalization error. This observation supports \Cref{lemma:M_th} concerning the minimum number of experts.
Additionally, the comparison between $M=50$ and $M=70$ reveals that when the number of experts is sufficient to ensure convergence, adding more experts delays the convergence time and worsens the generalization error within the same time horizon, aligning with our generalization error result in \Cref{thm:error_algorithm}.

In stark contrast, as depicted in Fig.~\ref{subfig:error_benchmark}, the MEC offloading strategy leads to poor performances, with errors obviously exceeding $\sigma_0=0.6$. 
This outcome supports our analysis in \Cref{prop:GT_benchmark}, which respectively indicates that the traditional MEC approaches fail to ensure that each expert specializes in a specific task type, resulting in an increasing expected overall generalization error.

\begin{figure}[t]
    \centering
    \captionsetup{font={footnotesize}}
    \includegraphics[width=0.8\linewidth]{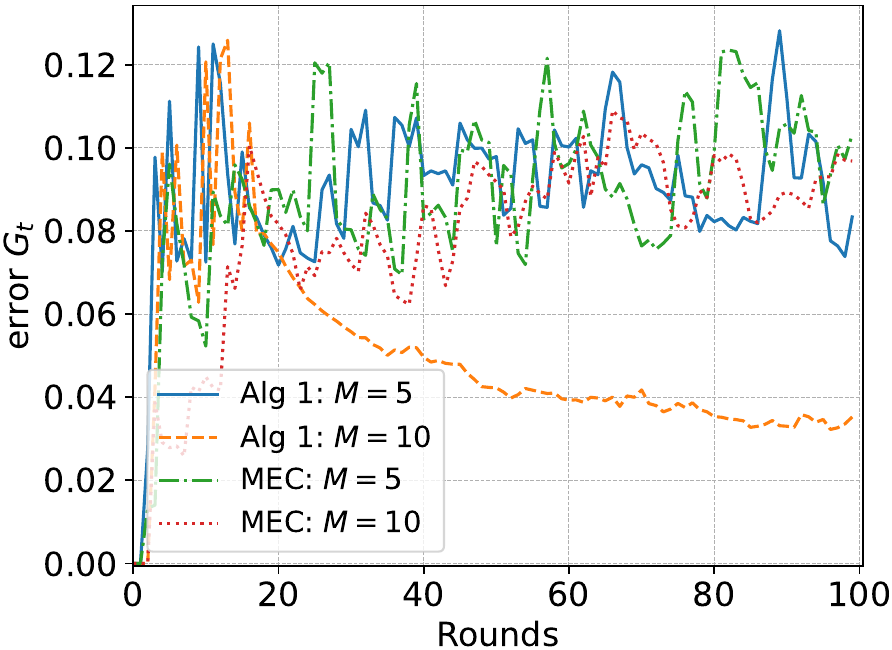}
    \caption{The dynamics of overall generalization errors under our \Cref{algo:update_MoE} and the existing MEC offloading strategies, using DNNs in MNIST datasets \cite{lecun1989handwritten}.}
    \label{fig:NN_MoE_algo}
\end{figure}

Finally, we extend our \Cref{algo:update_MoE} and theoretical results from linear models to DNNs by conducting computation-heavy experiments on the MNIST dataset (\!\!\cite{lecun1989handwritten}).
We use a five-layer neural network, consisting of two convolutional layers and three fully connected layers.
Here we set $T=100$, $N=3$, $d_u=4$, and $\eta=0.2$. We vary $M\in\{5,10\}$ and plot Fig.~\ref{fig:NN_MoE_algo}.
From the dataset, we verify our Assumption~\ref{assumption_1} that the variance among different types of tasks is $\sigma_0=0.1$.
In each round, we construct the feature matrix by averaging $s=100$ training data samples. 
To diversify model gaps of different types of tasks, we transform the $d\times d$ matrix into a $d\times d$ dimensional normalized vector to serve as the feature signal $\bm{v}_t$ in \Cref{def:feature_structure}.
As depicted in Fig.~\ref{fig:NN_MoE_algo}, when the number of expert, $M=5$, is below our derived lower bound in \Cref{lemma:M_th}, both our \Cref{algo:update_MoE} and the MEC offloading strategies fail to select the optimal server with availability constraint, resulting in large errors. However, when $M$ is increased to $10$, our \Cref{algo:update_MoE} significantly outperforms existing MEC strategies, which still select wrong experts.

\section{Conclusions}\label{section7}
To improve the learning performance in MEC with continual task arrivals, this paper is the first to introduce MoE theory in MEC networks and save MEC operation from the increasing generalization error over time. 
Our MoE theory treats each MEC server as an expert and dynamically adapts to changes in server availability by considering data transfer and computation time. 
We introduce an adaptive gating network in MEC to adaptively identify and route newly arrived tasks of unknown data distributions to available experts, enabling each expert to specialize in a specific type of tasks upon convergence.
We derived the minimum number of experts required to match each task with a specialized, available expert.
Our MoE approach consistently reduces the overall generalization error over time, unlike the traditional MEC approach.
Interesting, when the number of experts is sufficient for convergence, adding more experts delays the convergence time and worsens the generalization error within the same time horizon.
Finally, we perform extensive experiments on real datasets in DNNs to verify our theoretical results.

\appendix
\subsection{Proof of Lemma~\ref{lemma:h-h_hat}}\label{proof_lemma:h-h_hat}

For dataset $(\mathbf{X}_t,\mathbf{y}_t)$ generated in Definition~\ref{def:feature_structure} per round $t$, we can assume that the first sample of $\mathbf{X}_t$ is the signal vector. Therefore, we rewrite $\mathbf{X}_t=[\beta_t \bm{v}_{n}\ \mathbf{X}_{t,2}\ \cdots\ \mathbf{X}_{t,s}]$. Let $\tilde{\mathbf{X}}_t=[\beta_t \bm{v}_{n_t}\ 0\ \cdots\ 0]$ represent the matrix that only keeps the feature signal. 

Based on the definition of the gating network in Fig.~\ref{fig:gating},
we have $h_m(\mathbf{X}_t,\bm{\theta}_t^{(m)})=\sum_{i=1}^{s}(\bm{\theta}_t^{(m)})^\top \mathbf{X}_{t,i}$. 
Then we calculate
\begin{align*}
    &\Big|h_m(\mathbf{X}_t,\bm{\theta}_t^{(m)})-h_m(\tilde{\mathbf{X}}_t,\bm{\theta}_t^{(m)})\Big|\\=&\Big|(\bm{\theta}_t^{(m)})^\top\sum_{i=1}^{s}\mathbf{X}_{t,i}\Big|\\
    \leq& \Big|(\bm{\theta}_t^{(m)})^\top\sum_{i=2}^{s}\mathbf{X}_{t,i}\Big|+\Big|(\bm{\theta}_t^{(m)})^\top(\bm{v}_n-\bm{v}_{n'})\Big|\\
    \leq& \Big|\max_{t,j} \{\theta_{t,j}^{(m)} \}\Big|\cdot \Big|\sum_{i=2}^{s}\sum_{j=1}^d X_{t,(i,j)}\Big|+\mathcal{O}(\sigma_0^2)\\
\end{align*}
where $\theta_{t,j}^{(m)}$ is the $j$-th element of vector $\bm{\theta}_t^{(m)}$ and $X_{t,(i,j)}$ is the $(i,j)$-th element of matrix $\mathbf{X}_{t}$. 

Then we apply Hoeffding's inequality to obtain
\begin{align*}
    \mathbb{P}\Big(|\sum_{i=2}^{s}\sum_{j=1}^d X_{t,(i,j)}|<s\cdot d\cdot \sigma_0\Big)\geq 1-2\exp{(-\frac{\sigma_0^2 s^2 d^2}{\|\mathbf{X}_{t,i}\|_{\infty}})}.
\end{align*}
As $X_{t,(i,j)}\sim\mathcal{N}(0,\sigma_t^2)$, we have $\|\mathbf{X}_{t,i}\|_{\infty}=\mathcal{O}(\sigma_t)$, indicating $\exp{(-\frac{\sigma_0^2 s^2 d^2}{\|\mathbf{X}_{t,i}\|_{\infty}})}=o(1)$. Therefore, with probability at least $1-o(1)$, we have $\big|\sum_{i=2}^{s}\sum_{j=1}^d X_{t,(i,j)}\big|=\mathcal{O}(\sigma_0)$. 
Consequently, we obtain
$\big|h_m(\mathbf{X}_t,\bm{\theta}_t^{(m)})-h_m(\tilde{\mathbf{X}}_t,\bm{\theta}_t^{(m)})\big|=\mathcal{O}(\sigma_0)$.

\subsection{Proof of Proposition~\ref{lemma:M_th}}\label{proof_lemma:M_th}
We first prove that if $|\mathcal{M}_n|\geq M_{th}$, there always exists an idle expert $m\in \mathcal{M}_{n_t}$ with $\gamma_t^{(m)}=1$ for any task $t$. Then we prove that $M=\Omega(NM_{th}\ln(\frac{1}{\delta}))$ makes $|\mathcal{M}_n|\geq M_{th}$ with probability at least $1-o(1)$.

We first prove that $|\mathcal{M}_n|\geq M_{th}$ guarantees at least an idle expert for any task arrival with $\bm{w}_t\in\mathcal{W}_n$. For ease of exposition, we named tasks as task $n$ for any task with $\bm{w}_t\in\mathcal{W}_n$ in the following.
In this case, we need to guarantee that the probability, of which task $n$ arrives more than $M_{th}$ times within $d_u$ time slots, is smaller than infinitesimal $\delta$. In other words, $M_{th}$ should ensure
\begin{align*}
    \text{Pr(Task $n$ arrives more than $M_{th}$ times in $d_u$ time slots)}< \delta.
\end{align*}
Let $A_n$ denote the number of times task $n$ arrives in $d_u$ time slots. Since each task is selected independently with probability $\frac{1}{B}$, we have $A_n\sim \text{Bionomial}(d,1/N)$, and we seek $\text{Pr}(A_n>M_{th})< \delta$.

Since $d_u$ is large, we can approximate $A_n$ by a normal distribution:
\begin{align*}
    A_n\approx \mathcal{N}(\frac{1}{N},\frac{1}{N}(1-\frac{1}{N})).
\end{align*}
Using the normal tail bound, we have
\begin{align*}
    \text{Pr}(A_n>M_{th})\approx \text{Pr}\left(\frac{A_n-d_u/N}{\sqrt{d/N(1-1/N)}}>\frac{A_n-M_{th}/N}{\sqrt{d/N(1-1/N)}}\right).
\end{align*}
Using the standard normal quantile $\Phi^{-1}(1-\delta)$, we obtain
\begin{align*}
    M_{th}\geq \frac{d_u}{N}+\Phi^{-1}(1-\delta)\sqrt{\frac{d}{N}(1-\frac{1}{N})},
\end{align*}
which guarantees at least an idle expert for task $n$ arrival during any $d_u$ time slots with probability $1-\delta$.

Finally, we prove that $M=\Omega(NM_{th}\ln(\frac{1}{\delta}))$ makes $|\mathcal{M}_n|\geq M_{th}$ with probability at least $1-o(1)$.

Given $M$ experts in total, the expected number of experts of each expert set $\mathcal{M}_n$ is $\frac{M}{N}$. Let $\alpha=1-\frac{M_{th}N}{M}$.
Then, by Chernoff-Hoeffding inequality, we obtain
\begin{align*}
    \mathbb{P}(|\mathcal{M}_n|< M_{th})&=\mathbb{P}\Big(|\mathcal{M}_n|<(1-\alpha)\frac{M}{N}\Big)\\&\leq \exp(-\alpha^2\frac{M}{2N})\\&=\exp(-\frac{(1-\frac{M_{th}N}{M})^2M}{2N}).
\end{align*}
To achieve $|\mathcal{M}_n|\geq M_{th}$ with probability at least $1-o(1)$, we let
\begin{align*}
    \exp(-\frac{(1-\frac{M_{th}N}{M})^2M}{2N})<\delta,
\end{align*}
solving which we obtain $M=\Omega(NM_{th}\ln(\frac{1}{\delta}))$.

\subsection{Proof of Proposition~\ref{prop:exploration}}\label{proof_prop:exploration}
To prove \Cref{prop:exploration}, we first propose the following lemmas. Then we prove \Cref{prop:exploration} based on these lemmas.

\begin{lemma}\label{lemma:gradient}
At any time $t\in\{1,\cdots, T_1\}$ with update, the gating network parameter of expert $m\in\mathbb{M}$ satisfies
\begin{align*}
    \langle\bm{\theta}_{t}^{(m)}-\bm{\theta}_{t-1}^{(m)}, \bm{v}_t\rangle=\begin{cases}
        -\mathcal{O}(\sigma_0), &\text{if expert $m$ completes}\\ &\text{a task with $\bm{w}_n$ at }t,\\
        \mathcal{O}(M^{-1}\sigma_0),&\text{otherwise.}
    \end{cases}
\end{align*}
\end{lemma}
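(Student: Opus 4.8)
The plan is to compute the gradient of the task loss $\mathcal{L}_t$ in (\ref{task_loss}) with respect to each $\bm{\theta}_t^{(m)}$ and track how the update in (\ref{update_theta}) moves the parameter along the direction of the current feature signal $\bm{v}_t$. Since the training loss $\mathcal{L}_t^{tr}$ in (\ref{training_loss}) depends on $\bm{w}_{t+d_t}^{(m_t)}$ but not on $\mathbf{\Theta}_t$, only the locality loss $\mathcal{L}_t^{loc}$ in (\ref{locality_loss}) contributes to $\nabla_{\bm{\theta}_t^{(m)}}\mathcal{L}_t$. Writing $\mathcal{L}_t^{loc}=\sum_{m'} \pi_{m'}(\mathbf{X}_t,\mathbf{\Theta}_t)\,\|\bm{w}_{t+d_t}^{(m')}-\bm{w}_{t+d_t-1}^{(m')}\|_2$, I first observe that the model-increment factor $\|\bm{w}_{t+d_t}^{(m')}-\bm{w}_{t+d_t-1}^{(m')}\|_2$ is nonzero only for $m'=m_t$ (all other experts are unchanged per (\ref{noupdate_wt})), so the loss reduces to $\pi_{m_t}(\mathbf{X}_t,\mathbf{\Theta}_t)\cdot g_t$ with $g_t:=\|\bm{w}_{t+d_t}^{(m_t)}-\bm{w}_{t+d_t-1}^{(m_t)}\|_2$. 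I would bound the increment magnitude $g_t=\Theta(\sigma_0)$ using the update rule (\ref{update_wt}) together with \Cref{def:feature_structure} and Assumption~\ref{assumption_1}, treating it as a slowly varying scalar factor during the exploration phase $t\le T_1$.

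Next I would differentiate the softmax. The standard identity gives $\nabla_{\bm{\theta}_t^{(m)}}\pi_{m_t} = \pi_{m_t}(\mathds{1}\{m=m_t\}-\pi_m)\,\nabla_{\bm{\theta}_t^{(m)}} h_m$, and from (\ref{h_X_theta}) we have $\nabla_{\bm{\theta}_t^{(m)}} h_m=\sum_{i\in[s]}\mathbf{X}_{t,i}$. By \Cref{def:feature_structure} this sum equals $\beta_t\bm{v}_t$ plus $s-1$ zero-mean Gaussian samples, so by the Hoeffding argument already used in the proof of \Cref{lemma:h-h_hat} (\Cref{proof_lemma:h-h_hat}), $\langle\sum_{i\in[s]}\mathbf{X}_{t,i},\bm{v}_t\rangle=\Theta(1)$ with probability $1-o(1)$ (the signal term dominates the $\mathcal{O}(\sigma_0)$ noise contribution). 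Combining, the projected update is
\begin{align*}
    \langle\bm{\theta}_{t}^{(m)}-\bm{\theta}_{t-1}^{(m)},\bm{v}_t\rangle
    = -\eta\, g_t\,\pi_{m_t}\big(\mathds{1}\{m=m_t\}-\pi_m\big)\,\langle\textstyle\sum_{i\in[s]}\mathbf{X}_{t,i},\bm{v}_t\rangle.
\end{align*}
For $m=m_t$ the factor $(1-\pi_{m_t})$ is $\Theta(1)$ before convergence, yielding a projection of order $-\eta g_t=-\mathcal{O}(\sigma_0)$ (negative because the minus sign and the positive $g_t,\pi_{m_t},(1-\pi_{m_t})$). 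For $m\neq m_t$ the factor is $-\pi_m$; here I would use that during early exploration the softmax is near-uniform, $\pi_m=\Theta(1/M)$, so the projection has magnitude $\mathcal{O}(M^{-1}\sigma_0)$ and the sign comes out positive, matching the two cases in the claim.

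The main obstacle I anticipate is justifying the two order-of-magnitude claims simultaneously and uniformly over all $t\le T_1$: controlling $\pi_m=\Theta(1/M)$ for the non-selected experts requires showing the gating outputs stay sufficiently balanced throughout exploration (not yet diversified), which is somewhat circular with the conclusion of \Cref{prop:exploration} that this lemma feeds into. I would resolve this by an inductive argument over update rounds — assuming near-uniformity of $\bm{\pi}$ up to round $t$, proving the per-step increments have the stated orders, and verifying that accumulating such increments over fewer than $T_1$ rounds cannot push any $\pi_m$ out of the $\Theta(1/M)$ regime prematurely. The secondary difficulty is the sign bookkeeping: I must confirm the feature-signal inner product $\langle\sum_i\mathbf{X}_{t,i},\bm{v}_t\rangle$ is positive with high probability (true since the dominant term is $\beta_t\|\bm{v}_t\|_2^2>0$ with $\beta_t\in(0,C]$), so that the selected expert's alignment with $\bm{v}_t$ genuinely decreases by $\mathcal{O}(\sigma_0)$ while every other expert's alignment increases by the smaller $\mathcal{O}(M^{-1}\sigma_0)$.
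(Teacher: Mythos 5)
Your proposal follows essentially the same route as the paper's proof: only the locality loss contributes to $\nabla_{\bm{\theta}_t^{(m)}}\mathcal{L}_t$, only the $m_t$ term survives with model increment $\Theta(\sigma_0)$, the softmax derivative yields the factor $\pi_{m_t}(\mathds{1}\{m=m_t\}-\pi_m)\sum_{i}\mathbf{X}_{t,i}$, and the Hoeffding argument from the proof of \Cref{lemma:h-h_hat} controls the inner product with $\bm{v}_t$. The only divergence is the $m\neq m_t$ case, where you invoke near-uniformity $\pi_m=\Theta(1/M)$ directly while the paper deduces $\mathcal{O}(M^{-1}\sigma_0)$ from the identity $\sum_{m}\nabla_{\bm{\theta}_t^{(m)}}\mathcal{L}_t=\mathbf{0}$ --- a step that also tacitly requires the balancedness you flag, so your explicit inductive handling of the apparent circularity with \Cref{prop:exploration} is, if anything, more careful than the paper's.
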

\begin{proof}
According to the definition of locality loss in \cref{locality_loss}, we calculate
\begin{align}
    \nabla_{\bm{\theta}^{(m)}_t} \mathcal{L}_t=\frac{\partial \pi_{m_t}(\mathbf{X}_t,\mathbf{\Theta}_t)}{\partial\bm{\theta}^{(m)}_t}\|\bm{w}_t^{(m_t)}-\bm{w}_{t-1}^{(m_t)}\|_2.\label{partial_L_loc}
\end{align}
If $m=m_t$, we obtain
\begin{align}
    &\frac{\partial \pi_{m_t}(\mathbf{X}_t,\mathbf{\Theta}_t)}{\partial\bm{\theta}^{(m)}_t}\notag\\=&\pi_{m_t}(\mathbf{X}_t,\mathbf{\Theta}_t)\cdot \Big(\sum_{m'\neq m_t} \pi_{m'}(\mathbf{X}_t,\mathbf{\Theta}_t)\Big)\cdot \frac{\partial h_m(\mathbf{X}_t,\bm{\theta}_t^{(m)})}{\partial \bm{\theta}^{(m)}_t}\notag\\
    =&\pi_{m_t}(\mathbf{X}_t,\mathbf{\Theta}_t)\cdot \Big(\sum_{m'\neq m_t} \pi_{m'}(\mathbf{X}_t,\mathbf{\Theta}_t)\Big)\cdot \sum_{i\in[s_t]}\mathbf{X}_{t,i}.\label{partial_pi_mt}
\end{align}
If $m\neq m_t$, we obtain
\begin{align}
    \frac{\partial \pi_{m_t}(\mathbf{X}_t,\mathbf{\Theta}_t)}{\partial \bm{\theta}^{(m)}_t}&=-\pi_{m_t}(\mathbf{X}_t,\mathbf{\Theta}_t)\cdot \pi_{m}(\mathbf{X}_t,\mathbf{\Theta}_t)\cdot \sum_{i\in[s_t]}\mathbf{X}_{t,i}.\label{partial_pi_m'}
\end{align}
Based on \cref{partial_L_loc}, \cref{partial_pi_mt} and \cref{partial_pi_m'}, we obtain
\begin{align*}
    \sum_{m=1}^M \nabla_{\bm{\theta}_t^{(m)}} \mathcal{L}_t=\|\bm{w}_t^{(m_t)}-\bm{w}_{t-1}^{(m_t)}\|_2\sum_{m=1}^M \frac{\partial \pi_{m_t}(\mathbf{X}_t,\mathbf{\Theta}_t)}{\partial\bm{\theta}^{(m)}_t}=\mathbf{0}.
\end{align*}

Consequently, if $m=m_t$, we obtain
\begin{align*}
    &\langle\bm{\theta}_{t}^{(m_t)}-\bm{\theta}_{t-1}^{(m_t)}, \bm{v}_t\rangle\\=&-\eta \nabla_{\bm{\theta}^{(m_t)}_t} \mathcal{L}_t\cdot \bm{v}_t\\
    =&-O(\eta\cdot \|\bm{w}_t^{(m_t)}-\bm{w}_{t-1}^{(m_t)}\|_2)\\ &\cdot O(\pi_{m_t}(\mathbf{X}_t,\mathbf{\Theta}_t)\cdot \Big(\sum_{m'\neq m_t} \pi_{m'}(\mathbf{X}_t,\mathbf{\Theta}_t)\Big) \sum_{i\in[s_t]}\mathbf{X}_{t,i})\\
    =&-O(\sigma_0),
\end{align*}
based on the fact that $\|\bm{w}_t^{(m_t)}-\bm{w}_{t-1}^{(m_t)}\|_2=\sigma_0$, $O(\pi_m)=O(1)$ and $O(\eta)=O(1)$.

Since $\sum_{m=1}^M \nabla_{\bm{\theta}_t^{(m)}} \mathcal{L}_t=\mathbf{0}$, for any $m\neq m_t$, we obtain $\langle\bm{\theta}_{t}^{(m)}-\bm{\theta}_{t-1}^{(m)}, \bm{v}_t\rangle=O(M^{-1}\sigma_0)$.
\end{proof}

\begin{lemma}\label{lemma:theta_bound}
For any training round $t\in\{1,\cdots,T_1\}$, the gating network parameter of any expert $m$ satisfies $\|\bm{\theta}^{(m)}_{t}\|_{\infty}=\mathcal{O}(\sigma_0^{0.5})$.
\end{lemma}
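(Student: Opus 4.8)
\emph{Proof idea.} The plan is to bound $\|\bm{\theta}_t^{(m)}\|_\infty$ by telescoping the gradient-descent update (\ref{update_theta}) from the zero initialization $\bm{\theta}_0^{(m)}=\bm{0}$ and then controlling the total accumulation over the $T_1$ exploration rounds. First I would write
\[
\bm{\theta}_t^{(m)}=-\eta\sum_{\tau:\,\text{update},\,\tau\le t}\nabla_{\bm{\theta}_\tau^{(m)}}\mathcal{L}_\tau ,
\]
so that $\|\bm{\theta}_t^{(m)}\|_\infty$ is at most $\eta$ times the sum of the per-step gradient magnitudes. Each increment I would bound via the gradient expressions (\ref{partial_pi_mt})--(\ref{partial_pi_m'}) already derived inside the proof of \Cref{lemma:gradient}: every gradient is a product of softmax factors, which are $\mathcal{O}(1)$; the model change $\|\bm{w}_\tau^{(m_\tau)}-\bm{w}_{\tau-1}^{(m_\tau)}\|_2=\mathcal{O}(\sigma_0)$; and the feature sum $\sum_{i\in[s]}\mathbf{X}_{\tau,i}$, whose $\ell_\infty$ norm is $\mathcal{O}(1)$ because its signal part is $\beta_\tau\bm{v}_\tau$ with $\|\bm{v}_\tau\|_\infty=\mathcal{O}(1)$ and its noise part is $\mathcal{O}(\sigma_0)$ with probability $1-o(1)$ by the Hoeffding bound from the proof of \Cref{lemma:h-h_hat}. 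Hence one step is $\mathcal{O}(\eta\sigma_0)$ when $m=m_\tau$, and is damped to $\mathcal{O}(\eta M^{-1}\sigma_0)$ when $m\neq m_\tau$, consistent with the directional derivatives in \Cref{lemma:gradient} where the unselected experts share the gradient mass so that $\sum_m\nabla_{\bm{\theta}^{(m)}_\tau}\mathcal{L}_\tau=\mathbf{0}$.

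The crux is the counting step. Over the $T_1$ rounds there are $\mathcal{O}(T_1)$ updates in total, of which expert $m$ is itself the selected expert $m_\tau$ only $\mathcal{O}(T_1/M)$ times, since the exploration noise $r_\tau^{(m)}$ in (\ref{m_t}) spreads selections roughly uniformly across the $M$ experts during exploration. Summing the two contributions then gives
\[
\|\bm{\theta}_{t}^{(m)}\|_\infty
=\underbrace{\mathcal{O}\!\big(\tfrac{T_1}{M}\big)\cdot\mathcal{O}(\eta\sigma_0)}_{\text{selected steps}}
+\underbrace{\mathcal{O}(T_1)\cdot\mathcal{O}(\eta M^{-1}\sigma_0)}_{\text{unselected steps}}
=\mathcal{O}\!\Big(\tfrac{\eta\sigma_0 T_1}{M}\Big),
\]
and substituting $T_1=d_u+\lceil\eta^{-1}\sigma_0^{-0.5}M\ln(\tfrac{M}{\delta})\rceil$ collapses the factors $\eta$ and $M$ to leave the claimed $\mathcal{O}(\sigma_0^{0.5})$.

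The main obstacle I anticipate is a mild circularity in the per-step bound: the $\mathcal{O}(\eta M^{-1}\sigma_0)$ estimate for unselected experts rests on the softmax weights being near-balanced, $\pi_m=\Theta(1/M)$, which is itself guaranteed only while $\bm{\theta}^{(m)}$ (and hence $h_m$ in (\ref{h_X_theta})) stays small. I would therefore formalize the argument as a bootstrap induction over $\tau\le t$: assuming $\|\bm{\theta}_\tau^{(m)}\|_\infty=\mathcal{O}(\sigma_0^{0.5})$ keeps the gating outputs controlled and the softmax balanced, which validates the increment bounds above and closes the induction at round $t$ provided the accumulated constant does not inflate. Two secondary technical points are (i) showing the per-expert selection count concentrates at $\mathcal{O}(T_1/M)$ with probability $1-o(1)$, so no single heavily selected expert accumulates a larger component, and (ii) applying a union bound so that the high-probability control of the noise part of $\sum_i\mathbf{X}_{\tau,i}$ holds simultaneously across all $\mathcal{O}(T_1)$ rounds while keeping the overall failure probability $o(1)$; the residual $\ln(\tfrac{M}{\delta})$ factor from $T_1$ is absorbed into the $\mathcal{O}(\cdot)$ under the convention $\delta=o(1)$.
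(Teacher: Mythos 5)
Your proposal is correct and follows essentially the same route as the paper: telescope the gradient-descent updates from $\bm{\theta}_0^{(m)}=\bm{0}$, bound each increment via \Cref{lemma:gradient}, and multiply by the number of exploration rounds $T_1=d_u+\lceil\eta^{-1}\sigma_0^{-0.5}M\ln(\frac{M}{\delta})\rceil$ so that the $\eta$ and $\sigma_0$ factors combine to give $\mathcal{O}(\sigma_0^{0.5})$. If anything, your selected/unselected step counting is tighter than the paper's one-line bound $\|\bm{\theta}_{t}^{(m)}\|_{\infty}\leq \eta\cdot T_1\cdot \|\nabla_{\bm{\theta}^{(m)}_t} \mathcal{L}_t\|_{\infty}$, which as written carries an extra factor of $M\ln(\frac{M}{\delta})$; your accounting removes the $M$ (both treatments still absorb the logarithmic factor into the $\mathcal{O}(\cdot)$), and your bootstrap-induction remark addresses a circularity that the paper's proof does not discuss.
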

\begin{proof}
Based on \Cref{lemma:gradient}, for any $t\in\{1,\cdots,T_1\}$ the accumulated update of $\bm{\theta}^{(m)}_{t}$ throughout the exploration stage satisfies 
\begin{align*}
    \|\bm{\theta}_{t}^{(m)}\|_{\infty}\leq \eta\cdot T_1\cdot \|\nabla_{\bm{\theta}^{(m)}_t} \mathcal{L}_t\|_{\infty}=\mathcal{O}(\sigma_0^{0.5}),
\end{align*}
based on the fact that $\bm{\theta}_{0}^{(m)}=\mathbf{0}$ at the initial time.
\end{proof}

For any $m\neq m'$, define $\delta_{\mathbf{\Theta}}=|h_{m}(\mathbf{X}_t,\bm{\theta}_t^{(m)})-h_{m'}(\mathbf{X}_t,\bm{\theta}_t^{(m)})|$. Then we obtain the following lemma.
\begin{lemma}\label{lemma:pi-hat_pi}
At any round $t$, if $\delta_{\mathbf{\Theta}_t}=o(1)$, it satisfies $|\pi_{m}(\mathbf{X}_t,\mathbf{\Theta}_t)-\pi_{m'}(\mathbf{X}_t,\mathbf{\Theta}_t)|=\mathcal{O}(\delta_{\mathbf{\Theta}})$. Otherwise,  $|\pi_{m}(\mathbf{X}_t,\mathbf{\Theta}_t)-\pi_{m'}(\mathbf{X}_t,\mathbf{\Theta}_t)|=\Omega(\delta_{\mathbf{\Theta}})$.
\end{lemma}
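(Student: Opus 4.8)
The plan is to reduce the whole statement to an elementary estimate on the softmax normalizer plus a one-variable exponential inequality. Writing $h_{m''} := h_{m''}(\mathbf{X}_t,\bm{\theta}_t^{(m'')})$ and $Z := \sum_{m''=1}^{M} e^{h_{m''}}$ for the normalizer in (\ref{softmax}), the starting point is the exact identity
\begin{align*}
\pi_{m}(\mathbf{X}_t,\mathbf{\Theta}_t) - \pi_{m'}(\mathbf{X}_t,\mathbf{\Theta}_t) = \frac{e^{h_m} - e^{h_{m'}}}{Z},
\end{align*}
so that the entire lemma amounts to controlling the numerator $e^{h_m}-e^{h_{m'}}$ in terms of $\delta_{\mathbf{\Theta}} = |h_m - h_{m'}|$, while showing that $Z = \Theta(M)$ acts as a benign, two-sided bounded normalization.

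First I would establish that all logits are uniformly bounded with high probability. Combining Lemma~\ref{lemma:theta_bound}, which gives $\|\bm{\theta}_t^{(m'')}\|_{\infty}=\mathcal{O}(\sigma_0^{0.5})$, with the feature structure of Definition~\ref{def:feature_structure} — one signal column $\beta_t\bm{v}_t$ with $\|\bm{v}_t\|_{\infty}=\mathcal{O}(1)$ and $\beta_t=\mathcal{O}(1)$, together with $s-1$ zero-mean Gaussian columns — and the same Hoeffding concentration used in the proof of Lemma~\ref{lemma:h-h_hat}, I obtain $|h_{m''}|\le B$ for a constant $B$, simultaneously over all $m''$, with probability $1-o(1)$. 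This immediately yields $e^{h_{m''}}\in[e^{-B},e^{B}]$ and hence $Z=\Theta(M)$, and crucially it guarantees that no single logit dominates the normalizer.

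For the upper bound (case $\delta_{\mathbf{\Theta}}=o(1)$), I would apply the mean value theorem to write $e^{h_m}-e^{h_{m'}}=e^{\xi}(h_m-h_{m'})$ for some $\xi$ between $h_m$ and $h_{m'}$; since $|\xi|\le B$ we have $e^{\xi}=\Theta(1)$, so the factor $e^{\xi}/Z$ is bounded above (for fixed $M$) and $|\pi_m-\pi_{m'}| = (e^{\xi}/Z)\,\delta_{\mathbf{\Theta}} = \mathcal{O}(\delta_{\mathbf{\Theta}})$. For the lower bound (the otherwise case), I take without loss of generality $h_m\ge h_{m'}$ and factor $e^{h_m}-e^{h_{m'}}=e^{h_{m'}}(e^{\delta_{\mathbf{\Theta}}}-1)$; convexity of the exponential gives $e^{\delta_{\mathbf{\Theta}}}-1\ge\delta_{\mathbf{\Theta}}$, and with $e^{h_{m'}}\ge e^{-B}$ and $Z=\Theta(M)$ I obtain $|\pi_m-\pi_{m'}|\ge (e^{-B}/Z)\,\delta_{\mathbf{\Theta}} = \Omega(\delta_{\mathbf{\Theta}})$.

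I expect the main obstacle to be the uniform two-sided control of the logits rather than the elementary softmax algebra. The lower bound is the delicate direction: it fails if any single logit is allowed to grow, since then $Z$ would be dominated by $e^{h_{\max}}$ and the ratio $e^{h_{m'}}/Z$ could become exponentially small, destroying the $\Omega(\delta_{\mathbf{\Theta}})$ guarantee. It is precisely the bound $\|\bm{\theta}_t^{(m)}\|_{\infty}=\mathcal{O}(\sigma_0^{0.5})$ from Lemma~\ref{lemma:theta_bound}, together with the zero-mean structure of the non-signal samples — which keeps each logit's noise contribution concentrated at $\mathcal{O}(\sigma_0)$ rather than accumulating over the $s-1$ columns — that rules this out and makes both the $\mathcal{O}$ and the $\Omega$ directions go through with constants independent of the round $t$.
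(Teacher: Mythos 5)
Your proposal is correct and, at its core, takes the same route as the paper: both reduce the claim to the exact softmax identity relating $|\pi_m-\pi_{m'}|$ to the logit gap (the paper writes $|\pi_m-\pi_{m'}|=\pi_{m'}\,|e^{h_m-h_{m'}}-1|$, which is just your $|e^{h_m}-e^{h_{m'}}|/Z$ rearranged), then use $e^{x}-1\approx x$ for small $x$ in the upper-bound case and $e^{x}-1\ge x$ in the lower-bound case. The one substantive difference is that you explicitly prove the step the paper silently assumes: the paper's lower bound concludes $\pi_{m'}\,\delta_{\mathbf{\Theta}}=\Omega(\delta_{\mathbf{\Theta}})$ without justifying that $\pi_{m'}$ is bounded away from zero, whereas you derive a uniform two-sided bound on the logits from Lemma~\ref{lemma:theta_bound} and the feature structure of Definition~\ref{def:feature_structure}, giving $Z=\Theta(M)$ and $e^{h_{m'}}\ge e^{-B}$, which is exactly what makes the $\Omega(\delta_{\mathbf{\Theta}})$ direction legitimate (with the implicit constant depending on $M$, consistent with treating $M$ as fixed). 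So your write-up is a strictly more careful version of the paper's argument rather than a different one; the only caveat is that your conclusion then holds with probability $1-o(1)$ over the draw of $\mathbf{X}_t$ rather than deterministically, which matches how the lemma is actually invoked downstream.
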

\begin{proof}
At any round $t$, we calculate
\begin{align*}
    &|\pi_m(\mathbf{X}_t,\mathbf{\Theta}_t)-\pi_{m'}(\mathbf{X}_t,\mathbf{\Theta}_t)|\\=&\Big|\pi_{m'}(\mathbf{X}_t,\mathbf{\Theta}_t)\exp{(h_m(\mathbf{X}_t,\bm{\theta}_t^{(m)})-h_{m'}(\mathbf{X}_t,\bm{\theta}_t^{(m')}))}\\&-\pi_{m'}(\mathbf{X}_t,\mathbf{\Theta}_t)\Big|\\
    =&\pi_{m'}(\mathbf{X}_t,\mathbf{\Theta}_t)\Big|\exp{(h_m(\mathbf{X}_t,\bm{\theta}_t^{(m)})-h_{m'}(\mathbf{X}_t,\bm{\theta}_t^{(m')}))}-1\Big|,
\end{align*}
where the first equality is by solving \cref{softmax}. Then if $\delta_{\mathbf{\Theta}_t}$ is close to $0$, by applying Taylor series with sufficiently small $\delta_{\mathbf{\Theta}}$, we obtain 
\begin{align*}
    &|\pi_m(\mathbf{X}_t,\mathbf{\Theta}_t)-\pi_{m'}(\mathbf{X}_t,\mathbf{\Theta}_t)|\\
    \approx &\pi_{m'}(\mathbf{X}_t,\mathbf{\Theta}_t)|h_{m}(\mathbf{X}_t,\bm{\theta}_t^{(m)})-h_{m'}(\mathbf{X}_t,\bm{\theta}_t^{(m')})|\\
    =&\mathcal{O}(\delta_{\mathbf{\Theta}}),
\end{align*}
where the last equality is because of $\pi_m(\tilde{\mathbf{X}}_t,\mathbf{\Theta}_t)\leq 1$.

While if $\delta_{\mathbf{\Theta}_t}$ is not sufficiently small, we obtain
\begin{align*}
    &|\pi_m(\mathbf{X}_t,\mathbf{\Theta}_t)-\pi_{m'}(\mathbf{X}_t,\mathbf{\Theta}_t)|\\>&\pi_{m'}(\mathbf{X}_t,\mathbf{\Theta}_t)|h_{m}(\mathbf{X}_t,\bm{\theta}_t^{(m)})-h_{m'}(\mathbf{X}_t,\bm{\theta}_t^{(m')})|\\
    =&\Omega(\delta_{\mathbf{\Theta}}).
\end{align*}
This completes the proof.
\end{proof}

\begin{lemma}\label{lemma:pi_consistence}
For any $n\neq n'$, if expert $m\in\mathcal{M}_n$, the following property holds at round $T_1$:
\begin{align}
    \pi_m(\mathbf{X}_n,\mathbf{\Theta}_t)>\pi_m(\mathbf{X}_{n'},\mathbf{\Theta}_t), \forall m\in\mathbb{M}.\label{pi_consistence}
\end{align}
\end{lemma}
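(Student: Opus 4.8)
The plan is to compare the two softmax values by first stripping each feature matrix down to its signal component, then converting the defining inequality of $\mathcal{M}_n$ into a \emph{quantitative} gap in expert $m$'s own gating output, and finally arguing that the shared normalizer is essentially input-independent so that this numerator ordering survives. First I would invoke \Cref{lemma:h-h_hat} to replace each gating output $h_k(\mathbf{X}_n,\bm{\theta}_{T_1}^{(k)})$ by its signal-only value $\beta_n(\bm{\theta}_{T_1}^{(k)})^\top\bm{v}_n$ up to an additive $\mathcal{O}(\sigma_0)$ error, simultaneously for every expert $k\in\mathbb{M}$ and for both the type-$n$ and type-$n'$ inputs. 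This reduces the comparison of $\pi_m(\mathbf{X}_n,\mathbf{\Theta}_{T_1})$ and $\pi_m(\mathbf{X}_{n'},\mathbf{\Theta}_{T_1})$ through (\ref{softmax}) to a comparison of the deterministic inner products $(\bm{\theta}_{T_1}^{(k)})^\top\bm{v}_n$ versus $(\bm{\theta}_{T_1}^{(k)})^\top\bm{v}_{n'}$, which is exactly the quantity controlling membership in $\mathcal{M}_n$ via (\ref{M_n}).

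Second, for the numerator I would upgrade the strict inequality $(\bm{\theta}_{T_1}^{(m)})^\top\bm{v}_n>(\bm{\theta}_{T_1}^{(m)})^\top\bm{v}_{n'}$ guaranteed by $m\in\mathcal{M}_n$ into a separation of order $\Omega(\sigma_0^{0.5})$. I would obtain this by summing the per-round increments of \Cref{lemma:gradient} over the $T_1$ exploration rounds, exploiting that the locality-loss gradient in (\ref{partial_L_loc}) carries the model-error factor $\|\bm{w}_{t+d_t}^{(m)}-\bm{w}_{t+d_t-1}^{(m)}\|_2$: whenever the router misroutes a type-$n'$ task to $m$ the large inter-cluster error $\Theta(\sigma_0)$ drives $(\bm{\theta}^{(m)})^\top\bm{v}_{n'}$ down by $\mathcal{O}(\sigma_0)$, whereas a correctly routed type-$n$ task incurs only the negligible $\mathcal{O}(\sigma_0^{2})$ intra-cluster error from (\ref{w_norm}). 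Accumulating these asymmetric decrements over the $\lceil\eta^{-1}\sigma_0^{-0.5}M\ln(M/\delta)\rceil$ rounds built into $T_1$ in \Cref{prop:exploration} yields $h_m(\mathbf{X}_n)-h_m(\mathbf{X}_{n'})=\Omega(\sigma_0^{0.5})$, which dominates the $\mathcal{O}(\sigma_0)$ reduction error of the first step since $\sigma_0<\sigma_0^{0.5}$ for $\sigma_0\in(0,1)$.

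Third, and this is the crux, I must control the softmax normalizer $Z^{(n)}=\sum_{k}e^{h_k(\mathbf{X}_n)}$ against $Z^{(n')}$, because $\pi_m$ depends on every expert's output and all of these shift when the input type changes. Here I would use \Cref{lemma:theta_bound}, which forces every $h_k=\mathcal{O}(\sigma_0^{0.5})$, so both normalizers concentrate at $M(1+\mathcal{O}(\sigma_0^{0.5}))$ and $\log(Z^{(n)}/Z^{(n')})$ equals the average increment $\tfrac1M\sum_k(\bm{\theta}^{(k)})^\top(\bm{v}_n-\bm{v}_{n'})$ up to $\mathcal{O}(\sigma_0)$. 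Because \Cref{lemma:M_th} makes the expert partition balanced with $|\mathcal{M}_n|\approx M/N$, the positive contributions from experts specialized in $n$ and the negative contributions from those specialized in $n'$ cancel in this average, leaving a normalizer ratio of $1+\mathcal{O}(\sigma_0)$ that is negligible beside the $\Omega(\sigma_0^{0.5})$ numerator gap. Combining the three steps, optionally phrasing the final comparison through \Cref{lemma:pi-hat_pi}, gives $\pi_m(\mathbf{X}_n,\mathbf{\Theta}_{T_1})>\pi_m(\mathbf{X}_{n'},\mathbf{\Theta}_{T_1})$.

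I expect the normalizer estimate to be the main obstacle, since it is the only place where the global distribution of \emph{all} experts' parameters, rather than expert $m$ alone, enters the argument; the delicate point is making the cross-cluster cancellation rigorous rather than merely symmetric-in-expectation, so that the targeted gain in expert $m$'s own output provably outweighs the diffuse shift in the denominator.
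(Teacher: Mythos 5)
Your overall route is the same as the paper's, but you do considerably more work in two places, and the comparison is instructive. The paper's proof is four lines: it writes $|h_m(\mathbf{X}_n,\bm{\theta}_t^{(m)})-h_m(\mathbf{X}_{n'},\bm{\theta}_t^{(m)})|=|\langle\bm{\theta}_t^{(m)},\bm{v}_n-\bm{v}_{n'}\rangle|$, bounds this by $\mathcal{O}(\sigma_0^{0.5})$ using \Cref{lemma:theta_bound} and $\|\bm{v}_n-\bm{v}_{n'}\|_\infty=\mathcal{O}(1)$, and then invokes the second branch of \Cref{lemma:pi-hat_pi} to pass to $|\pi_m(\mathbf{X}_n,\mathbf{\Theta}_t)-\pi_m(\mathbf{X}_{n'},\mathbf{\Theta}_t)|=\Omega(\sigma_0^{0.5})$, with the sign of the inequality read off directly from the defining condition of $\mathcal{M}_n$ in (\ref{M_n}). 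Your second step --- accumulating the asymmetric increments of \Cref{lemma:gradient} over the $\lceil\eta^{-1}\sigma_0^{-0.5}M\ln(M/\delta)\rceil$ exploration rounds to obtain an explicit $\Omega(\sigma_0^{0.5})$ lower bound on the logit gap --- does not appear in the paper's proof of this lemma; the paper obtains the $\Omega(\cdot)$ scale purely from the dichotomy in \Cref{lemma:pi-hat_pi}, which is logically thinner but is what the paper relies on. Your added rigor here is in the spirit of how \Cref{lemma:exploration_phase} is proved and is a reasonable strengthening.

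The real divergence is your third step. You are right that $\pi_m(\mathbf{X}_n,\mathbf{\Theta}_t)$ and $\pi_m(\mathbf{X}_{n'},\mathbf{\Theta}_t)$ have different normalizers, and that \Cref{lemma:pi-hat_pi} is stated (and proved) for two experts evaluated on a \emph{common} input, where the normalizer cancels exactly in the ratio $\pi_m/\pi_{m'}$. The paper applies that lemma across two different inputs without comment, which is exactly the gap you identify. However, your proposed repair does not close: the cross-cluster cancellation in $\frac{1}{M}\sum_k(\bm{\theta}^{(k)})^\top(\bm{v}_n-\bm{v}_{n'})$ is only symmetric in expectation, contributions from experts in $\mathcal{M}_{n''}$ with $n''\neq n,n'$ are not controlled by any lemma in the paper, and without that cancellation the normalizer shift is generically $\mathcal{O}(\sigma_0^{0.5})$ --- the same order as the numerator gain you are trying to preserve --- so the sign of $\pi_m(\mathbf{X}_n,\mathbf{\Theta}_t)-\pi_m(\mathbf{X}_{n'},\mathbf{\Theta}_t)$ is not determined by your estimates. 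You flag this yourself, correctly, as the unresolved crux. In short: where the paper is silent, you are explicit but incomplete; neither argument fully handles the normalizer, and a complete proof would need either a quantitative bound on $\log(Z^{(n)}/Z^{(n')})$ strictly below the numerator gap or a reformulation of \Cref{lemma:pi-hat_pi} valid for varying inputs.
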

\begin{proof}
We first calculate
\begin{align*}
    &|h_m(\mathbf{X}_n,\bm{\theta}_t^{(m)})-h_m(\mathbf{X}_{n'},\bm{\theta}_t^{(m)})|\\=&
    |\langle \bm{\theta}_t^{(m)}, \bm{v}_n-\bm{v}_{n'}\rangle|\\
    =&\|\bm{\theta}_t^{(m)}\|_{\infty} \cdot \|\bm{v}_n-\bm{v}_{n'}\|_{\infty}\\
    =&\mathcal{O}(\sigma_0^{0.5}),
\end{align*}
where the last equality is because of $\|\bm{\theta}_t^{(m)}\|_{\infty}=\mathcal{O}(\sigma_0^{0.5})$ in \Cref{lemma:theta_bound} and $\|\bm{v}_n-\bm{v}_{n'}\|_{\infty}=\mathcal{O}(1)$. Then according to \Cref{lemma:pi-hat_pi}, we obtain
\begin{align}
    |\pi_m(\mathbf{X}_n,\mathbf{\Theta}_t)-\pi_m(\mathbf{X}_{n'},\mathbf{\Theta}_t)|=\Omega(\sigma_0^{0.5})\label{true_max_pi}
\end{align}
for any $t\in\{1,\cdots, T_1\}$, which completes the proof of \Cref{lemma:pi_consistence}.
\end{proof}

\begin{lemma}\label{lemma:fairness}
At the end of the exploration stage, with probability at least $1-\delta$, the fraction of tasks dispatched to any expert $m\in[M]$ satisfies
\begin{align}
    \Big|f_{T_1}^{(m)}-\frac{1}{M}\Big|=\mathcal{O}(\eta^{0.5}M^{-1}).\label{fT_1=1/M}
\end{align}
\end{lemma}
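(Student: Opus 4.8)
The plan is to show that throughout the exploration stage $t\le T_1$ the router's choice in (\ref{m_t}) is, up to negligible bias, a uniform draw over the experts, so that the count $L_{T_1}^{(m)}$ (the number of tasks routed to expert $m$, cf.\ (\ref{L_t(m)})) concentrates sharply around its mean $T_1/M$, and then to translate this into the stated bound on $f_{T_1}^{(m)}=L_{T_1}^{(m)}/T_1$ by plugging in the value of $T_1$. The crucial quantitative point is that the required $M^{-1}$ scaling forces a multiplicative concentration bound rather than a naive additive one.

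First I would establish near-uniform selection. For every $t\le T_1$, \Cref{lemma:theta_bound} gives $\|\bm{\theta}_t^{(m)}\|_\infty=\mathcal{O}(\sigma_0^{0.5})$ for all experts, so each gating output $h_m(\mathbf{X}_t,\bm{\theta}_t^{(m)})=\sum_{i\in[s]}(\bm{\theta}_t^{(m)})^\top\mathbf{X}_{t,i}$ is small and its pairwise gaps $|h_m-h_{m'}|=\mathcal{O}(\sigma_0^{0.5})$ have not yet diversified; since $\bm{\theta}_0^{(m)}=\bm{0}$ and the updates in \Cref{lemma:gradient} treat experts symmetrically, the experts are exchangeable during this phase. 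Because the noise $r_t^{(m)}$ in (\ref{m_t}) is i.i.d.\ across experts, the $\arg\max$ over the available set $\{m:\gamma_t^{(m)}=1\}$ is a uniform pick up to the $o(1)$ deterministic bias from the $h_m$'s, which averages out by exchangeability. I would thus argue that the marginal per-round selection probability of any expert is $1/M$ and hence $\mathbb{E}[L_{T_1}^{(m)}]=T_1/M$ exactly.

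Next I would apply concentration. An additive Hoeffding bound only gives a deviation of order $\sqrt{\ln(1/\delta)/T_1}=\mathcal{O}(\eta^{0.5}M^{-1/2})$, which is too weak, so the key insight is to use the multiplicative (relative) Chernoff bound that exploits the small per-round success probability $p=1/M$. Writing $\mu=T_1/M$, we have $\mathbb{P}(|L_{T_1}^{(m)}-\mu|\ge\epsilon\mu)\le 2\exp(-\epsilon^2\mu/3)$, so choosing $\epsilon=\sqrt{3M\ln(2/\delta)/T_1}$ makes the right-hand side at most $\delta$ and yields $|f_{T_1}^{(m)}-1/M|\le\sqrt{3\ln(2/\delta)/(M T_1)}$. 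Substituting $T_1=d_u+\lceil\eta^{-1}\sigma_0^{-0.5}M\ln(M/\delta)\rceil=\Theta(\eta^{-1}\sigma_0^{-0.5}M\ln(M/\delta))$, one factor of $M$ in $T_1$ cancels a factor under the square root, leaving $|f_{T_1}^{(m)}-1/M|=\frac{1}{M}\sqrt{3\eta\sigma_0^{0.5}\ln(2/\delta)/\ln(M/\delta)}=\mathcal{O}(\eta^{0.5}M^{-1})$ once the constant $\sigma_0$ and the slowly varying log-ratio are absorbed. A union bound over the $M$ experts only rescales $\delta$ inside the logarithm and promotes the estimate to hold for all $m$ simultaneously with probability at least $1-\delta$.

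The hardest step is the first: justifying rigorously that the availability constraint $\gamma_t^{(m)}$ does not bias the selection frequency away from $1/M$. Because a chosen expert stays busy for a random delay and is excluded from (\ref{m_t}) until it completes, $L_{T_1}^{(m)}$ is a sum of \emph{dependent}, not i.i.d., indicators, and heterogeneous delay distributions could in principle skew selection toward faster experts. I would handle this by passing to the martingale-difference form of the count and invoking a Freedman/Bernstein-type martingale inequality, whose predictable quadratic variation is again of order $T_1/M$ by the near-uniform conditional selection probability and therefore reproduces the same $M^{-1}$ scaling; the exchangeability of experts before their gating parameters diversify is precisely what keeps these conditional probabilities symmetric and close to $1/M$ throughout $t\le T_1$.
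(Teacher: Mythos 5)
Your proposal is correct and, at the level of skeleton, matches the paper: both arguments first invoke the symmetry/exchangeability of the experts during exploration to get $\mathbb{E}[f_{T_1}^{(m)}]=\tfrac{1}{M}$, then apply a concentration inequality to $f_{T_1}^{(m)}$, and finally take a union bound over the $M$ experts. The substantive difference is in the concentration step, and here your version is actually the stronger one. The paper applies the \emph{additive} Hoeffding bound $\mathbb{P}(|f_{T_1}^{(m)}-\tfrac{1}{M}|\geq\epsilon)\leq 2\exp(-2\epsilon^2 T_1)$ and then asserts that solving $2M\exp(-2\epsilon^2T_1)=\delta$ gives $\epsilon=\mathcal{O}(\eta^{0.5}M^{-1})$; but as you observe, this actually yields $\epsilon=\sqrt{\ln(2M/\delta)/(2T_1)}=\Theta(\eta^{0.5}\sigma_0^{0.25}M^{-1/2})$ after substituting $T_1=\Theta(\eta^{-1}\sigma_0^{-0.5}M\ln(M/\delta))$, i.e.\ only an $M^{-1/2}$ rate. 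Your switch to the multiplicative Chernoff bound, which exploits that the per-round selection probability is $1/M$ so the variance proxy is $T_1/M$ rather than $T_1$, is precisely what recovers the claimed $M^{-1}$ scaling; your arithmetic $\sqrt{3\ln(2/\delta)/(MT_1)}=\mathcal{O}(\eta^{0.5}M^{-1})$ checks out. You are also more careful than the paper on a second point: the indicators $\mathds{1}\{m_\tau=m\}$ are dependent because a selected expert is excluded from (\ref{m_t}) while busy and the gating parameters evolve, so neither Hoeffding nor Chernoff applies verbatim; your Freedman-type martingale argument with predictable quadratic variation of order $T_1/M$ is the right way to make the bound rigorous, whereas the paper leaves this dependence unaddressed. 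In short, your route is genuinely different at the key quantitative step and repairs a gap in the paper's own derivation of the $M^{-1}$ rate.
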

\begin{proof}
By the symmetric property, we have that for any $m\in[M]$, $\mathbb{E}[f_{T_1}^{(m)}]=\frac{1}{M}$.

By Hoeffding's inequality, we obtain 
\begin{align*}
    \mathbb{P}(|f_{T_1}^{(m)}-\frac{1}{M}|\leq \epsilon)\geq 1-2\exp{(-2\epsilon^2T_1)}.
\end{align*}
Then we further obtain
\begin{align*}
    &\mathbb{P}(|f_{T_1}^{(m)}-\frac{1}{M}|\leq \epsilon,\forall m\in[M])\\\geq& (1-2\exp{(-2\epsilon^2T_1)})^M\\
    \geq& 1-2M\exp{(-2\epsilon^2T_1)}).
\end{align*}

Let $\delta=1-2M\exp{(-2\epsilon^2T_1)})$. Then we obtain $\epsilon=\mathcal{O}(\eta^{0.5} M^{-1})$. Subsequently, there is a probability of at least $1-\delta$ that $\big|f_{T_1}^{(m)}-\frac{1}{M}\big|=\mathcal{O}(\eta^{0.5}M^{-1})$.
\end{proof}

\begin{lemma}\label{lemma:exploration_phase}
At the end of the exploration stage, i.e., $t=T_1$, the following property holds
\begin{align*}
    \Big\|\bm{\theta}_{T_1}^{(m)}-\bm{\theta}_{T_1}^{(m')}\Big\|_{\infty}=\mathcal{O}(\eta^{-0.5}\sigma_0),
\end{align*}
for any $m, m'\in [M]$ and $m\neq m'$.
\end{lemma}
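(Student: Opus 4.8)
The plan is to write the terminal gap as a telescoped sum of per-round differential updates and then show that, because routing during exploration is almost perfectly balanced across experts, the bulk of these updates cancel, leaving only a residual controlled by the fairness deviation of \Cref{lemma:fairness}. Starting from the common initialization $\bm{\theta}^{(m)}_0=\bm{\theta}^{(m')}_0=\bm{0}$ and iterating the update rule (\ref{update_theta}), I would write
\begin{align*}
\bm{\theta}^{(m)}_{T_1}-\bm{\theta}^{(m')}_{T_1}=-\eta\sum_{t\le T_1}\big(\nabla_{\bm{\theta}^{(m)}_t}\mathcal{L}_t-\nabla_{\bm{\theta}^{(m')}_t}\mathcal{L}_t\big),
\end{align*}
the sum running over the rounds at which a gating update occurs. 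The first structural input is that every gradient is parallel to $\bm{g}_t:=\sum_{i\in[s]}\mathbf{X}_{t,i}$, as computed in (\ref{partial_pi_mt})--(\ref{partial_pi_m'}): each coordinate increment is a scalar multiple of $\bm{g}_t$, and since $\|\bm{g}_t\|_\infty=\mathcal{O}(1)$ (its signal part $\beta_t\bm{v}_t$ has $\|\bm{v}_t\|_\infty=\mathcal{O}(1)$ and its noise part is $\mathcal{O}(\sigma_0)$ by the Hoeffding estimate in the proof of \Cref{lemma:h-h_hat}), the $\ell_\infty$ norm of the accumulated gap is governed by the accumulated scalar coefficients.

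Next I would classify each round by the identity of the selected expert $m_t$. When $m_t\notin\{m,m'\}$ both gradients carry the common factor $-\pi_{m_t}\pi_\bullet\|\bm{w}^{(m_t)}_t-\bm{w}^{(m_t)}_{t-1}\|_2\,\bm{g}_t$, so their difference is of higher order $\mathcal{O}(M^{-1}|\pi_m-\pi_{m'}|)$ and, crucially, the common piece cancels in the difference during exploration where the softmax weights are still near-uniform. The dominant contributions come from rounds with $m_t=m$ or $m_t=m'$, each of which displaces $\bm{\theta}^{(m)}_t-\bm{\theta}^{(m')}_t$ along $\mp\bm{g}_t$ by a step whose projection onto $\bm{v}_t$ has magnitude $\mathcal{O}(\sigma_0)$ by \Cref{lemma:gradient} (note that this already absorbs the factor $\eta$). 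A round with $m_t=m$ and one with $m_t=m'$ for the same task type push the gap in opposite directions, so after grouping by cluster the net displacement along each $\bm{v}_n$ is proportional to the \emph{difference} in the number of type-$n$ tasks routed to $m$ versus $m'$.

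I would then bound this count imbalance using \Cref{lemma:fairness}: the fraction of tasks sent to any expert deviates from $1/M$ by at most $\mathcal{O}(\eta^{0.5}M^{-1})$, so the difference in task counts between $m$ and $m'$ over the $T_1\approx\eta^{-1}\sigma_0^{-0.5}M\ln(\tfrac{M}{\delta})$ exploration rounds is $\mathcal{O}(\eta^{0.5}M^{-1})\cdot T_1=\mathcal{O}(\eta^{-0.5}\sigma_0^{-0.5}\ln(\tfrac{M}{\delta}))$. Multiplying this imbalance by the per-task projected step $\mathcal{O}(\sigma_0)$ from \Cref{lemma:gradient}, and absorbing the logarithmic and $M$-dependent constants, yields the claimed $\|\bm{\theta}^{(m)}_{T_1}-\bm{\theta}^{(m')}_{T_1}\|_\infty=\mathcal{O}(\eta^{-0.5}\sigma_0)$; the factor $\eta^{-0.5}$ is precisely the product of the $\eta^{0.5}$ fairness gap and the $\eta^{-1}$ growth of the horizon $T_1$.

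The main obstacle I anticipate is making the cancellation rigorous rather than heuristic. The per-round increments of $\bm{\theta}^{(m)}_t-\bm{\theta}^{(m')}_t$ form a nearly mean-zero martingale-difference sequence, since the exploration noise $r^{(m)}_t=o(1)$ in (\ref{m_t}) makes $m$ and $m'$ equally likely to be selected while their gating outputs remain within $o(1)$ of each other; converting the in-expectation balance of \Cref{lemma:fairness} into the stated high-probability $\ell_\infty$ bound therefore requires an Azuma--Hoeffding step applied coordinate-wise, with a union bound over the at most $p$ coordinates and over expert pairs. A secondary difficulty is the non-orthogonality of the cluster signals $\{\bm{v}_n\}$: cross-type contributions $\langle\bm{g}_t,\bm{v}_{n'}\rangle$ with $n'\ne n$ do not vanish exactly, and one must verify that after summation they stay dominated by the $\mathcal{O}(\sigma_0)$ noise scale instead of accumulating, which again follows from their zero mean and the same concentration argument.
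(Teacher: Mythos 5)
Your proposal follows essentially the same route as the paper's proof: telescope the gap from the common zero initialization, use \Cref{lemma:gradient} to assign each round a step of $\mathcal{O}(\sigma_0)$ for the selected expert and $\mathcal{O}(M^{-1}\sigma_0)$ otherwise, observe that the difference is governed by the count imbalance $|f_{T_1}^{(m)}-f_{T_1}^{(m')}|$, and close with \Cref{lemma:fairness} so that $\eta^{-0.5}$ emerges as the product of the $\eta^{0.5}$ fairness deviation and the $\eta^{-1}$ scaling of $T_1$. Your additional remarks on martingale concentration and the non-orthogonality of the $\bm{v}_n$ go beyond what the paper addresses, but the core decomposition and the two supporting lemmas are identical.
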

\begin{proof}
Based on \Cref{lemma:gradient} and \Cref{lemma:fairness} and their corresponding proofs above, we can prove \Cref{lemma:exploration_phase} below.

Define $f_t^{(m)}:=\frac{1}{t}\sum_{\tau=1}^t\mathds{1}\{m_{\tau}=m\}$ as the fraction of tasks dispatched to expert $m$ since $t=1$.
For experts $m$ and $m'$, they are selected by the router for $T_1\cdot f_{T_1}^{(m)}$ and $T_1\cdot f_{T_1}^{(m')}$ times during the exploration stage, respectively. Therefore, we obtain
\begin{align*}
    \|\bm{\theta}_{T_1}^{(m)}\|_{\infty}&=f_{T_1}^{(m)}\cdot T_1\cdot \mathcal{O}(\sigma_0)-(1-f_{T_1}^{(m)})\cdot T_1\cdot \mathcal{O}(M^{-1}\sigma_0),\\
    \|\bm{\theta}_{T_1}^{(m')}\|_{\infty}&=f_{T_1}^{(m')}\cdot T_1\cdot \mathcal{O}(\sigma_0)-(1-f_{T_1}^{(m')})\cdot T_1\cdot \mathcal{O}(M^{-1}\sigma_0).
\end{align*}

Then by \cref{update_theta} and \Cref{lemma:gradient}, we calculate 
\begin{align*}
    &\Big\|\bm{\theta}_{T_1}^{(m)}-\bm{\theta}_{T_1}^{(m')}\big\|_{\infty}\\=& \Big| \big(f_{T_1}^{(m)}-f_{T_1}^{(m')}\big)\cdot T_1\cdot  \mathcal{O}(\sigma_0)-\big((1-f_{T_1}^{(m)})\\ &-(1-f_{T_1}^{(m')})\big)\cdot T_1\cdot \mathcal{O}(M^{-1}\sigma_0)\big|\\
    =& |f_{T_1}^{(m)}-f_{T_1}^{(m')}|\cdot T_1\cdot  \mathcal{O}(\sigma_0)\\
    =&\mathcal{O}(\eta^{-0.5} \sigma_0), 
\end{align*}
where the first equality is derived based on the update steps in \Cref{lemma:gradient}, and the last equality is because of $T_1\cdot |f_{T_1}^{(m)}-f_{T_1}^{(m')}|=\mathcal{O}(\eta^{-0.5})$ by \cref{fT_1=1/M} in \Cref{lemma:fairness}.
\end{proof}

According to \Cref{lemma:gradient}, if an expert $m$ completes a task at $t<T_1$, its gating output $(\bm{\theta}_t^{(m)})^\top \bm{v}_n$ with respect to feature vector $\bm{v}_n$ will be reduced compared to $(\bm{\theta}_{t-1}^{(m)})^\top \bm{v}_n$. 
Meanwhile, the other experts' outputs are increased. Then for the next task arrival of ground truth $\bm{w}_n$, it will be routed to another expert. 

Let $Y_t^{(m)}$ denote the number of times that expert $m$ is routed until time $t$.
By Cheroff-Hoeffding inequality, we obtain
\begin{align*}
    \mathbb{P}(Y^{(m)}_t\geq 1,\forall m\in\mathbb{M})&\geq \Big(1-(1-\frac{1}{M})^t\Big)^M\\
    &\geq 1-M(1-\frac{1}{M})^t\\
    &\geq 1-M\exp(-\frac{t}{M})\\
    & \geq 1-\delta,
\end{align*}
solving which we obtain $t\geq \lceil M\ln(\frac{M}{\delta})\rceil$. Therefore, given $T_1=d_u+\lceil \eta^{-1}\sigma_0^{-0.5}M\ln(\frac{M}{\delta})\rceil$, each expert has been explored at least $\eta^{-1}\sigma_0^{-0.5}$ times during the expert-exploration stage (i.e., $t<T_1$).
Here the first inequality is because the probability of $Y_t^{(m)}\geq 1$ under routing strategy (\ref{m_t}) is greater than that under randomly routing strategy with identical probability $\frac{1}{M}$. The second inequality is because $(1-\frac{1}{M})^t\rightarrow 0$ and the third inequality is derived by Binomial theorem. 

Then based on \Cref{lemma:pi_consistence}, we can prove that each expert $m$ stabilizes within an expert set $\mathcal{M}_n$ by equivalently proving the following properties:
\begin{align}
    \pi_m(\mathbf{X}_n,\mathbf{\Theta}_t)>\pi_{m'}(\mathbf{X}_{n},\mathbf{\Theta}_t),\ \ \pi_{m'}(\mathbf{X}_{n'},\mathbf{\Theta}_t)>\pi_m(\mathbf{X}_{n'},\mathbf{\Theta}_t),\label{max_pi_exploration}
\end{align}
where $\mathbf{X}_n$ and $\mathbf{X}_{n'}$ contain feature signals $\bm{v}_n$ and $\bm{v}_{n'}$, respectively. 

Next, we prove \cref{max_pi_exploration} by contradiction. Assume there exist two experts $m\in\mathcal{M}_n$ and $m'\in\mathcal{M}_{n'}$ such that 
\begin{align*}
    \pi_m(\mathbf{X}_{n},\mathbf{\Theta}_t)>\pi_{m'}(\mathbf{X}_{n},\mathbf{\Theta}_t),\ \ \pi_m(\mathbf{X}_{n'},\mathbf{\Theta}_t)>\pi_{m'}(\mathbf{X}_{n'},\mathbf{\Theta}_t),
\end{align*}
which is equivalent to
\begin{align}
    \pi_m(\mathbf{X}_{n},\mathbf{\Theta}_t)>\pi_m(\mathbf{X}_{n'},\mathbf{\Theta}_t)>\pi_{m'}(\mathbf{X}_{n'},\mathbf{\Theta}_t)>\pi_{m'}(\mathbf{X}_{n},\mathbf{\Theta}_t) ,\label{wrong_max_pi}
\end{align}
because of $\pi_{m
}(\mathbf{X}_{n},\mathbf{\Theta}_t)>\pi_{m}(\mathbf{X}_{n'},\mathbf{\Theta}_t)$ and $\pi_{m
'}(\mathbf{X}_{n'},\mathbf{\Theta}_t)>\pi_{m'}(\mathbf{X}_{n},\mathbf{\Theta}_t)$ based on the definition of expert set $\mathcal{M}_n$ in \cref{M_n}. Then we prove \cref{wrong_max_pi} does not exist at $t=T_1$.

For task $t=T_1$, we calculate
\begin{align}
    &|h_m(\mathbf{X}_{n},\bm{\theta}_{T_1}^{(m)})-h_{m'}(\mathbf{X}_{n},\bm{\theta}_{T_1}^{(m)})|\notag\\ \leq& \|\bm{\theta}_{T_1}^{(m)}-\bm{\theta}_{T_1}^{(m')}\|_{\infty} \|\bm{v}_{n}\|_{\infty}\notag\\
    =& \mathcal{O}(\sigma_0 \eta^{-0.5}),\label{wrong_pi_2}
\end{align}
where the first inequality is derived by union bound, and the second equality is because of $\|\bm{v}_{n}\|_{\infty}=\mathcal{O}(1)$ and $\|\bm{\theta}_{T_1}^{(m)}-\bm{\theta}_{T_1}^{(m')}\|_{\infty}=\mathcal{O}(\sigma_0 \eta^{-0.5})$ derived in \Cref{lemma:exploration_phase} at $T_1$.

Then according to \Cref{lemma:pi-hat_pi} and \cref{wrong_pi_2}, we obtain 
\begin{align}
    |\pi_m(\mathbf{X}_{n},\mathbf{\Theta}_{T_1})-\pi_{m'}(\mathbf{X}_{n},\mathbf{\Theta}_{T_1})|=\mathcal{O}(\sigma_0 \eta^{-0.5}).\label{wrong_pi_3}
\end{align}

Based on \cref{wrong_max_pi}, we further calculate
\begin{align*}
    &|\pi_m(\mathbf{X}_n,\mathbf{\Theta}_{T_1})-\pi_{m'}(\mathbf{X}_{n},\mathbf{\Theta}_{T_1})|\\ \geq &|\pi_m(\mathbf{X}_n,\mathbf{\Theta}_{T_1})-\pi_{m}(\mathbf{X}_{n'},\mathbf{\Theta}_{T_1})|\\
    =&\Omega(\sigma_0^{0.5}),
\end{align*}
where the first inequality is derived by \cref{wrong_max_pi}, and the last equality is derived in \cref{true_max_pi}. This contradicts with \cref{wrong_pi_3} as $\sigma_0 \eta^{-0.5}<\sigma_0^{0.5}$ given $\eta=\mathcal{O}(\sigma_0^{0.5})$. Therefore, \cref{wrong_max_pi} does not exist for $t=T_1$, and \cref{max_pi_exploration} is true for $t=T_1$.
Consequently, at time $T_1$, the router can assign tasks within the same cluster $\mathcal{W}_n$ to any expert $m\in\mathcal{M}_n$, and (\ref{hm-hm'}) is obviously true based on (\ref{true_max_pi}).

\subsection{Proof of Proposition~\ref{prop:expert_learning}}\label{proof_prop:expert_learning}
For $t>T_1$, any task $n_t$ with $\bm{w}_{n_t}\in\mathcal{W}_k$ will be routed to the correct expert $m\in\mathcal{M}_k$. Let $\bm{w}^{(m)}$ denote the minimum $\ell^2$-norm offline solution for expert $m$. Based on the update rule of $\bm{w}_t^{(m_t)}$ in \cref{update_wt}, we calculate
\begin{align*}
    &\bm{w}^{(m_t)}_t-\bm{w}^{(m_t)}\\=&\bm{w}_{t-1}^{(m_t)}+\mathbf{X}_t(\mathbf{X}_t^\top \mathbf{X}_t)^{-1}(\mathbf{y}_t-\mathbf{X}_t^{\top}\bm{w}_{t-1}^{(m_t)})-\bm{w}^{(m_t)}\\   
    =&(\bm{I}-\mathbf{X}_t(\mathbf{X}_t^\top \mathbf{X}_t)^{-1}\mathbf{X}_t^{\top})\bm{w}_{t-1}^{(m_t)}\\&+\mathbf{X}_t(\mathbf{X}_t^\top \mathbf{X}_t)^{-1}\mathbf{X}^{\top}_t\bm{w}^{(m_t)}-\bm{w}^{(m_t)}\\
    =&(\bm{I}-\mathbf{X}_t(\mathbf{X}_t^\top \mathbf{X}_t)^{-1}\mathbf{X}_t^{\top})(\bm{w}_{t-1}^{(m_t)}-\bm{w}^{(m_t)}),
\end{align*}
where the second equality is because of $\mathbf{y}_t=\mathbf{X}^{\top}_t\bm{w}^{(m_t)}$. Define $\bm{P}_t=\mathbf{X}_t(\mathbf{X}_t^\top \mathbf{X}_t)^{-1}\mathbf{X}_t^{\top}$ for task $n_t$, which is the projection operator on the solution space $\bm{w}_{n_t}$. Then we obtain
\begin{align*}
    \bm{w}^{(m)}_t-\bm{w}^{(m)}=(\bm{I}-\bm{P}_t)\cdots (\bm{I}-\bm{P}_{T_1+1})(\bm{w}_{T_1}^{(m)}-\bm{w}^{(m)})
\end{align*}
for each expert $m\in[M]$.

Since orthogonal projections $\bm{P}_t$'s are non-expansive operators, $\forall t\in\{T_1+1,\cdots ,T\},$, it also follows that
\begin{align*}
 \|\bm{w}^{(m)}_t-\bm{w}^{(m)}\|\leq\|\bm{w}^{(m)}_{t-1}-\bm{w}^{(m)}\| \leq\cdots\leq \|\bm{w}^{(m)}_{T_1}-\bm{w}^{(m)}\|.
\end{align*}
As the solution spaces $\mathcal{W}_k$ is fixed for each expert $m\in\mathcal{M}_k$, we further obtain
\begin{align*}
    \|\bm{w}_t^{(m)}-\bm{w}_{T_1+1}^{(m)}\|_{\infty}&=\|\bm{w}_t^{(m)}-\bm{w}^{(m)}+\bm{w}^{(m)}-\bm{w}_{T_1+1}^{(m)}\|_{\infty}\\
    &\leq \|\bm{w}_t^{(m)}-\bm{w}^{(m)}\|_{\infty}+\|\bm{w}_{T_1+1}^{(m)}-\bm{w}^{(m)}\|_{\infty}\\
    &\leq \max_{\bm{w}_n,\bm{w}_{n'}\in \mathcal{W}_k}\|\bm{w}_n-\bm{w}_{n'}\|_{\infty}\\ &=\mathcal{O}(\sigma_0^{2}),
\end{align*}
where the first inequality is derived by the union bound, the second inequality is because of the orthogonal projections for the update of $\bm{w}_t^{(m)}$ per task, and the last equality is because of $\|\bm{w}_n-\bm{w}_{n'}\|_{\infty}=\mathcal{O}(\sigma_0^{2})$ for any two ground truths in the same set $\mathcal{W}_k$.

\subsection{Proof of Proposition~\ref{prop:GT_benchmark}}\label{proof_prop:GT_benchmart}
If the MEC network operator always chooses the nearest or the most powerful expert for each task arrival, the task $n$ routed to each expert is random. Then we use the same result of (\ref{E[wt-wi]}) in \Cref{lemma:error_cases} to prove \Cref{prop:GT_benchmark}.

We calculate the generalization error as:
\begin{align*}
    &\mathbb{E}[G_T]\\=&\frac{1}{T}\sum_{i=1}^T\mathbb{E}[\|\bm{w}_T^{(m_i)}-\bm{w}_{n_i}\|^2]\\
    =&\frac{1}{T}\sum_{i=1}^T\Big(r^{L_T^{(m_t)}}\|\bm{w}_{i}\|^2+\sum_{l=1}^{T}(1-r)r^{L_{T-l}^{(m_l)}}\mathbb{E}[\|\bm{w}_{n_l}-\bm{w}_{n_i}\|^2]\Big)\\
    =&\frac{1}{T}\sum_{t=1}^T r^{L_{T}^{(m_t)}}\|\bm{w}_t\|^2\\&+\frac{1}{T}\sum_{t=1}^T(1-r^{L_T^{(m_t)}})\mathbb{E}\Big[\|\bm{w}_{n}-\bm{w}_{n'}\|^2\Big|n,n'\in[N]\Big],
\end{align*}
where the second equality is because of \cref{E[wt-wi]} in \Cref{lemma:error_cases}, and the last equality is because any tasks $\bm{w}_{n_l}$ and $\bm{w}_{n_i}$ are randomly sampled from set $[N]$.

\subsection{Proof of Theorem~\ref{thm:error_algorithm}}\label{proof_thm:error_algorithm}
For expert $m$, let $\tau^{(m)}(l)\in\{1,\cdots,T_1\}$ represent the training round of the $l$-th time that the router selects expert $m$ during the exploration stage. For instance, $\tau^{(1)}(2)=5$ indicates that round $t=5$ is the second time the router selects expert 1. 
\begin{lemma}\label{lemma:error_cases}
At any round $t\in\{T_1+1,\cdots,T\}$, for $i\in\{T_1+1,\cdots, t\}$, we have
\begin{align*}
    \|\bm{w}_t^{(m_i)}-\bm{w}_{n_i}\|^2=\|\bm{w}_{T_1}^{(m_i)}-\bm{w}_{n_i}\|^2+O(\sigma_0^2).
\end{align*}
While at any round $t\in\{1,\cdots,T_1\}$, for any $i\in\{1,\cdots, t\}$, we have
\begin{align}
    &\mathbb{E}[\|\bm{w}_t^{(m_i)}-\bm{w}_{n_i}\|^2]\label{E[wt-wi]}\\ = &r^{L_t^{(m_i)}}\mathbb{E}[\|\bm{w}_{n_i}\|^2]+\sum_{l=1}^{L_t^{(m_i)}}(1-r)r^{L_t^{(m_i)}-l}\mathbb{E}[\|\bm{w}_{\tau^{(m_i)}(l)}-\bm{w}_{n_i}\|^2],\notag
\end{align}
where $L^{(m_i)}_t=t\cdot f_t^{(m_i)}$ and $r=1-\frac{s}{d}$.
\end{lemma}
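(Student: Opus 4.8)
The plan is to treat the two regimes separately, since the router behaves very differently before and after the convergence time $T_1$. For the stabilized regime $t>T_1$ I would invoke \Cref{prop:exploration} and \Cref{prop:expert_learning} directly: after $T_1$ the router sends every task $i$ with $\bm{w}_{n_i}\in\mathcal{W}_k$ to an expert $m_i\in\mathcal{M}_k$, and by \Cref{prop:expert_learning} that expert's model only drifts by $\|\bm{w}_t^{(m_i)}-\bm{w}_{T_1}^{(m_i)}\|_{\infty}=\mathcal{O}(\sigma_0^2)$. Writing $\bm{w}_t^{(m_i)}-\bm{w}_{n_i}=(\bm{w}_t^{(m_i)}-\bm{w}_{T_1}^{(m_i)})+(\bm{w}_{T_1}^{(m_i)}-\bm{w}_{n_i})$ and expanding the square, both the perturbation term and the cross term are controlled by this $\mathcal{O}(\sigma_0^2)$ drift together with the boundedness of the ground truths from \Cref{assumption_1}, which yields $\|\bm{w}_t^{(m_i)}-\bm{w}_{n_i}\|^2=\|\bm{w}_{T_1}^{(m_i)}-\bm{w}_{n_i}\|^2+\mathcal{O}(\sigma_0^2)$.

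The substantive part is the exploration regime $t\le T_1$, where I would set up a one-step-in-expectation recursion over the successive updates of a fixed expert. Recall from the proof of \Cref{prop:expert_learning} that the update rule (\ref{update_wt}) together with $\mathbf{y}_t=\mathbf{X}_t^\top\bm{w}_t$ gives the projection identity $\bm{w}^{(m)}_{\tau}-\bm{w}_{\tau}=(\bm{I}-\bm{P}_{\tau})(\bm{w}^{(m)}_{\tau-1}-\bm{w}_{\tau})$, where $\bm{P}_{\tau}=\mathbf{X}_{\tau}(\mathbf{X}_{\tau}^\top\mathbf{X}_{\tau})^{-1}\mathbf{X}_{\tau}^\top$ is the orthogonal projection onto the $s$-dimensional column space of $\mathbf{X}_{\tau}$. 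Let $\bm{a}=\bm{w}^{(m_i)}_{\tau-1}$ be the model before the $l$-th update of expert $m_i$, let $\bm{b}=\bm{w}_{\tau^{(m_i)}(l)}$ be the trained task's ground truth, and let $\bm{c}=\bm{w}_{n_i}$ be the reference. After the update, $\bm{w}'-\bm{c}=(\bm{b}-\bm{c})+(\bm{I}-\bm{P})(\bm{a}-\bm{b})$; expanding $\|\bm{w}'-\bm{c}\|^2$ and taking expectation over the fresh randomness of $\mathbf{X}_{\tau}$—which to leading order satisfies $\mathbb{E}[\bm{I}-\bm{P}]=r\bm{I}$ and $\mathbb{E}[\bm{u}^\top(\bm{I}-\bm{P})\bm{u}]=r\|\bm{u}\|^2$—the cross term and the quadratic term collapse exactly to
\begin{align*}
    \mathbb{E}\big[\|\bm{w}'-\bm{c}\|^2\big]=r\,\mathbb{E}\big[\|\bm{a}-\bm{c}\|^2\big]+(1-r)\,\|\bm{b}-\bm{c}\|^2.
\end{align*}
Unrolling this contraction over the $L_t^{(m_i)}$ updates of expert $m_i$, starting from $\bm{w}_0^{(m_i)}=\bm{0}$ so that the initial term is $\mathbb{E}[\|\bm{w}_{n_i}\|^2]$, reproduces the geometric-weight formula (\ref{E[wt-wi]}).

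The main obstacle is justifying the projection-expectation identities cleanly, because by \Cref{def:feature_structure} the column space of $\mathbf{X}_{\tau}$ is \emph{not} a uniformly random $s$-dimensional subspace: it is spanned by $s-1$ isotropic Gaussian noise samples \emph{and} one deterministic signal direction $\beta_{\tau}\bm{v}_{\tau}$, so $\mathbb{E}[\bm{I}-\bm{P}_{\tau}]$ is not the clean multiple $r\bm{I}$—it acts as $0$ along $\bm{v}_{\tau}$ (that direction is always captured) and as $1-\tfrac{s-1}{p-1}\approx r$ on its orthogonal complement. I would argue that the deviation from $r\bm{I}$ is confined to a rank-one subspace and to the $\tfrac{1}{p}$-versus-$\tfrac{1}{p-1}$ gap, both lower-order, so the leading contraction factor is exactly $r=1-\tfrac{s}{p}$; the residual signal-aligned contribution sits at the $\mathcal{O}(\sigma_0^2)$ scale already present in $\|\bm{b}-\bm{c}\|^2$ and is absorbed. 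I would also use that $\bm{P}_{\tau}$ is independent of the history $(\bm{a},\bm{c})$—the noise of task $\tau$ is drawn fresh of all past tasks—so the conditional expectation factors as above. The remaining bookkeeping, namely identifying $L_t^{(m_i)}=t f_t^{(m_i)}$ with the number of updates in Line~\ref{line_expert_update} of \Cref{algo:update_MoE} and verifying the zero-initialization of the recursion, is routine.
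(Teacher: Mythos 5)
Your proposal follows essentially the same route as the paper's proof: the same projection identity $\bm{w}'-\bm{c}$ expressed through $\bm{I}-\bm{P}_\tau$, the same one-step expectation recursion with contraction factor $r$ justified by rotational symmetry of the Gaussian samples, and the same unrolling from $\bm{w}_0^{(m)}=\bm{0}$, while the $t>T_1$ case is handled by invoking \Cref{prop:expert_learning} just as the paper does (the paper simply states it and skips the expansion). If anything you are more careful than the paper, which applies the $\mathbb{E}[\bm{P}_\tau]=\tfrac{s}{d}\bm{I}$ identity without acknowledging the deterministic signal direction $\beta_\tau\bm{v}_\tau$ in the column space of $\mathbf{X}_\tau$ that you explicitly flag and argue is lower order.
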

\begin{proof}
Based on \Cref{prop:exploration} and \Cref{prop:expert_learning}, we can easily obtain $\|\bm{w}_t^{(m_i)}-\bm{w}_{n_i}\|^2=\|\bm{w}_{T_1}^{(m_i)}-\bm{w}_{n_i}\|^2+O(\sigma_0^2)$, such that we skip its proof here.

For any round $t\in\{1,\cdots,T_1\}$, define $\mathbf{P}_t=\mathbf{X}_t(\mathbf{X}_t^{\top}\mathbf{X}_t)^{-1}\mathbf{X}_t^{\top}$ for task $t$. At current time $t$, there are totally $L_t^{(m)}=t\cdot f_t^{(m)}$ tasks routed to expert $m$, where $f_t^{(m)}=\frac{1}{t}\sum_{\tau=1}^t\mathds{1}\{m_{\tau}=m\}$ is defined in \Cref{lemma:exploration_phase}.

Based on the update rule of $\bm{w}_t^{(m)}$ in \cref{update_wt}, we calculate 
\begin{align*}
    &\|\bm{w}_t^{(m_i)}-\bm{w}_{n_i}\|^2\\=&\|\bm{w}_{\tau^{(m_i)}(L_t^{(m_i)})}^{(m_i)}-\bm{w}_{n_i}\|^2\\=&\|(\mathbf{I}-\mathbf{P}_{t})\bm{w}_{\tau^{(m_i)}(L_t^{(m_i)}-1)}^{(m_i)}+\mathbf{P}_{t}\bm{w}_{\tau^{m_i}(L_t^{(m_i)})}-\bm{w}_{n_i}\|^2\\=&\|(\mathbf{I}-\mathbf{P}_{t})(\bm{w}_{\tau^{(m_i)}(L_t^{(m_i)}-1)}^{(m_i)}-\bm{w}_{n_i})\\&+\mathbf{P}_{t}(\bm{w}_{\tau^{m_i}(L_t^{(m_i)})}-\bm{w}_{n_i})\|^2,
\end{align*}
where the first equality is because there is no update of $\bm{w}_t^{(m_i)}$ for $t\in\{\tau^{(m_i)}(L_t^{(m_i)}),\cdots, t\}$, and the second equality is by \cref{update_wt}. 

As $\mathbf{P}_t$ is the orthogonal projection matrix for the row space of $\mathbf{X}_t$, based on the rotational symmetry of the standard normal distribution, it follows that $\mathbb{E}\Big[\|\mathbf{P}_t(\bm{w}_{\tau^{m_i}(L_t^{(m_i)})}-\bm{w}_{n_i})\|\Big]=\frac{s}{d}\|\bm{w}_{\tau^{m_i}(L_t^{(m_i)})}-\bm{w}_{n_i}\|^2$. Then we further calculate
\begin{align*}
    &\mathbb{E}[\|\bm{w}_t^{(m_i)}-\bm{w}_{n_i}\|^2]\\=&(1-\frac{s}{d})\mathbb{E}[\|\bm{w}_{\tau^{(m_i)}(L_t^{(m_i)}-1)}^{(m_i)}-\bm{w}_{n_i}\|^2]\\&+\frac{s}{d}\mathbb{E}[\|\bm{w}_{\tau^{(m_i)}(L_t^{(m_i)})}-\bm{w}_{n_i}\|^2]\\
    =&(1-\frac{s}{d})^{L_t^{(m_i)}}\mathbb{E}[\|\bm{w}_0^{(m_i)}-\bm{w}_{n_i}\|^2]\\&+\sum_{l=1}^{L_t^{(m_i)}}(1-\frac{s}{d})^{L_t^{(m_i)}-l}\frac{s}{d}\mathbb{E}[\|\bm{w}_{\tau^{(m_i)}(L_t^{(m_i)})}-\bm{w}_{n_i}\|^2]\\
    =&r^{L_t^{(m_i)}}\mathbb{E}[\|\bm{w}_{n_i}\|^2]+\\&\sum_{l=1}^{L_t^{(m_i)}}(1-r)r^{L_t^{(m_i)}-l}\mathbb{E}[\|\bm{w}_{\tau^{(m_i)}(l)}-\bm{w}_{n_i}\|^2],
\end{align*}
where the second equality is derived by iterative calculation, and the last equality is because of $\bm{w}_0^{(m)}=\mathbf{0}$ for any expert $m$. Here we denote by $r=1-\frac{s}{d}$ to simplify notations.
\end{proof}

Based on \Cref{lemma:error_cases}, we calculate
\begin{align*}
    &\mathbb{E}[G_T]\\=&\frac{1}{T}\sum_{t=1}^T\mathbb{E}[\|\bm{w}_T^{(m_t)}-\bm{w}_{n_t}\|^2]\\
    =&\frac{1}{T}\sum_{t=1}^{T}\mathbb{E}[\|\bm{w}_{T_1}^{(m_t)}-\bm{w}_{n_t}\|^2]\\
    \leq &\frac{1}{T}\sum_{t=1}^T\Big(r^{L_{T_1}^{(m_t)}}\|\bm{w}_{t}\|^2\\&+\sum_{l=1}^{L_{T_1}^{(m_t)}}(1-r)r^{L_{T_1}^{(m_t)}-l}\mathbb{E}[\|\bm{w}_{\tau^{(m_t)}(l)}-\bm{w}_{n_t}\|^2]+O(\sigma_0^2)\Big)\\
    =&\frac{1}{T}\sum_{t=1}^{T}r^{L_{T}^{(m_t)}}\|\bm{w}_{t}\|^2+\frac{1}{T}\sum_{t=1}^{T}(1-r^{L_{T_1}^{(m_t)}})\\&\cdot r^{L_T^{(m_t)}-L_{T_1}^{(m_t)}}\mathbb{E}\Big[\|\bm{w}_{n}-\bm{w}_{n'}\|^2\Big|n,n'\in[N]\Big]+\mathcal{O}(\sigma_0^{2}),
\end{align*}
where the inequality is because of $\mathbb{E}[\|\bm{w}_{T_1}^{(m_t)}-\bm{w}_{n_t}\|^2]=O(\sigma_0^2)$ for $t\geq T_1+1$ derived in \Cref{lemma:error_cases}. This completes the proof of \Cref{thm:error_algorithm}.
\balance

\end{document}